\documentclass[letterpaper, 10 pt, conference]{ieeeconf} 
\IEEEoverridecommandlockouts                              
\usepackage[utf8]{inputenc}
\usepackage[T1]{fontenc}
\usepackage{amsmath,bm}
\usepackage{amsfonts}

\usepackage{algorithm}
\usepackage{algpseudocode}

\usepackage[dvips,frame,rotate,import]{xy} 

\usepackage{array}
\usepackage{tabularx}
\usepackage{multirow}
\usepackage{amsthm}
\let\labelindent\relax  
\usepackage{enumitem}
\usepackage[english]{babel}
\usepackage{cite}

\usepackage[colorlinks=true,linkcolor=blue]{hyperref}
\usepackage{threeparttable} 
\usepackage{multirow}
\usepackage{graphicx}
\usepackage[font=footnotesize]{caption}
\usepackage{subcaption}
\usepackage{titlesec}
\usepackage{nth}
\usepackage{cleveref} 
\usepackage{lipsum}  
\usepackage[dvipsnames]{xcolor}
\usepackage{colortbl}
\definecolor{light-gray}{gray}{0.93}
\definecolor{semilight-gray}{gray}{0.8}

\usepackage{amssymb}  

\newtheoremstyle{defiStyle}  
  {3pt} {3pt} {\normalfont} {} {\bfseries\itshape} {:} { }  
  {\thmname{#1} \thmnumber{#2}\textit{\thmnote{ (#3)}}}  

\theoremstyle{defiStyle}  
\newtheorem{defi}{Definition}  

\theoremstyle{thmStyle}
\newtheorem{thm}{Theorem}  
\newtheorem{lemma}{Lemma}  
\newtheorem{remark}{Remark}  
\newtheorem{prob}{Problem}

\title{\LARGE \bf 
GJK-CBF: Control Barrier Functions for Convex \\ Rigid Body Collision Avoidance on SE(3)
}

\author{Yi-Hsuan Chen$^{1}$, Shuo Liu$^{2}$, Wei Xiao$^{3}$, Michael Otte$^{1}$, Calin Belta$^{4}$
\thanks{$^{1}$Y. Chen and M. Otte are with the Department of Aerospace Engineering, University of Maryland, College Park, MD, USA. {\tt \{yhchen91, otte\}
@umd.edu}}
\thanks{$^{2}$Shuo Liu is with the Department of Mechanical Engineering, Boston University, Boston, MA 02215, USA. {\tt liushuo@bu.edu}}
\thanks{$^{3}$W. Xiao is with the School of Electrical and Electronic Engineering, Nanyang Technological University, and with M3S, SMART, Singapore {\tt wei.xiao@ntu.edu.sg}, {\tt weixy@mit.edu}}
\thanks{$^{4}$C. Belta is with the Department of Electrical and Computer Engineering and Department of Computer Science, University of Maryland, College Park, MD, USA. {\tt calin@umd.edu}}
}

\newcommand{\gjkcbf}{\mbox{GJK-CBF}}

\newcommand{\yihsuan}[1]{\textcolor{red}{[YiHsuan: #1]}}
\newcommand{\calin}[1]{\textcolor{blue}{[Calin: #1]}}

\newcommand{\fixed}[1]{\textcolor{black}{#1}}

\newcommand{\unom}{\overline{\bm{u}}}
\newcommand{\vnom}{\overline{\bm{v}}}
\newcommand{\wnom}{\overline{\bm{\omega}}}

\newcommand{\removeForSpace}[1]{}

\begin{document}
\maketitle
\begin{abstract}  
Collision avoidance among convex bodies is a fundamental problem in robotics. Control Barrier Functions (CBFs) provide a practical framework for real-time safety filtering due to their computational efficiency. For general convex bodies, exact separation measures, such as distance or scaling factor, are typically computed through optimization. Existing CBF formulations often obtain the required gradient via differentiable optimization (diffOpt), adding computational overhead. In contrast, we leverage the Gilbert-Johnson-Keerthi (GJK) algorithm to obtain the current witness pair---the pair of points realizing the minimum distance or penetration depth---and formulate a CBF, termed \gjkcbf, whose gradient is constructed directly from the relative rigid-body motion of the witness pair, {\it without resorting to diffOpt}. This formulation applies to both 2D and 3D environments across a broad class of convex body pairs, provided that at least one in each one-to-one interaction is strictly convex. The proposed \gjkcbf~is validated in various scenarios, including multi-robot position swapping in both 2D and 3D, navigation through a vertical slit in 3D, and its applicability to manipulators. The results demonstrate collision-free motion across all scenarios while reducing the conservativeness introduced by geometric approximations, particularly in narrow environments. A video of all simulations is available at {\url{https://youtu.be/eDvscHhGIj0}}.
\end{abstract}

\section{Introduction}
One of the key difficulties in robotic navigation is to simultaneously satisfy system kinematic/dynamic constraints, safety requirements, and actuation limits. The problem is further complicated when collision avoidance must explicitly account for the geometry of both the ego robot and nearby robots/obstacles. For each pair of bodies, referred to as a collision pair, the separation distance depends on their relative configuration, i.e., position and orientation, and is nontrivial to evaluate and differentiate. The notion of configuration obstacles \cite{lozano-perez_spatial_1983} was proposed to account for robot geometry via Minkowski operations and has been widely used in motion planning \cite{4M, LaValle.Planning.06, Lien.RSS08}. While the same geometric representation underlies classical collision-detection algorithms such as Gilbert-Johnson-Keerthi (GJK) \cite{gjk} and the Expanding Polytope Algorithm (EPA) \cite{epa} \fixed{to characterize distance proximity, these methods are not designed for safety control synthesis.}
\begin{figure}[ht!]
	\centering
    \includegraphics[width=0.92\linewidth]{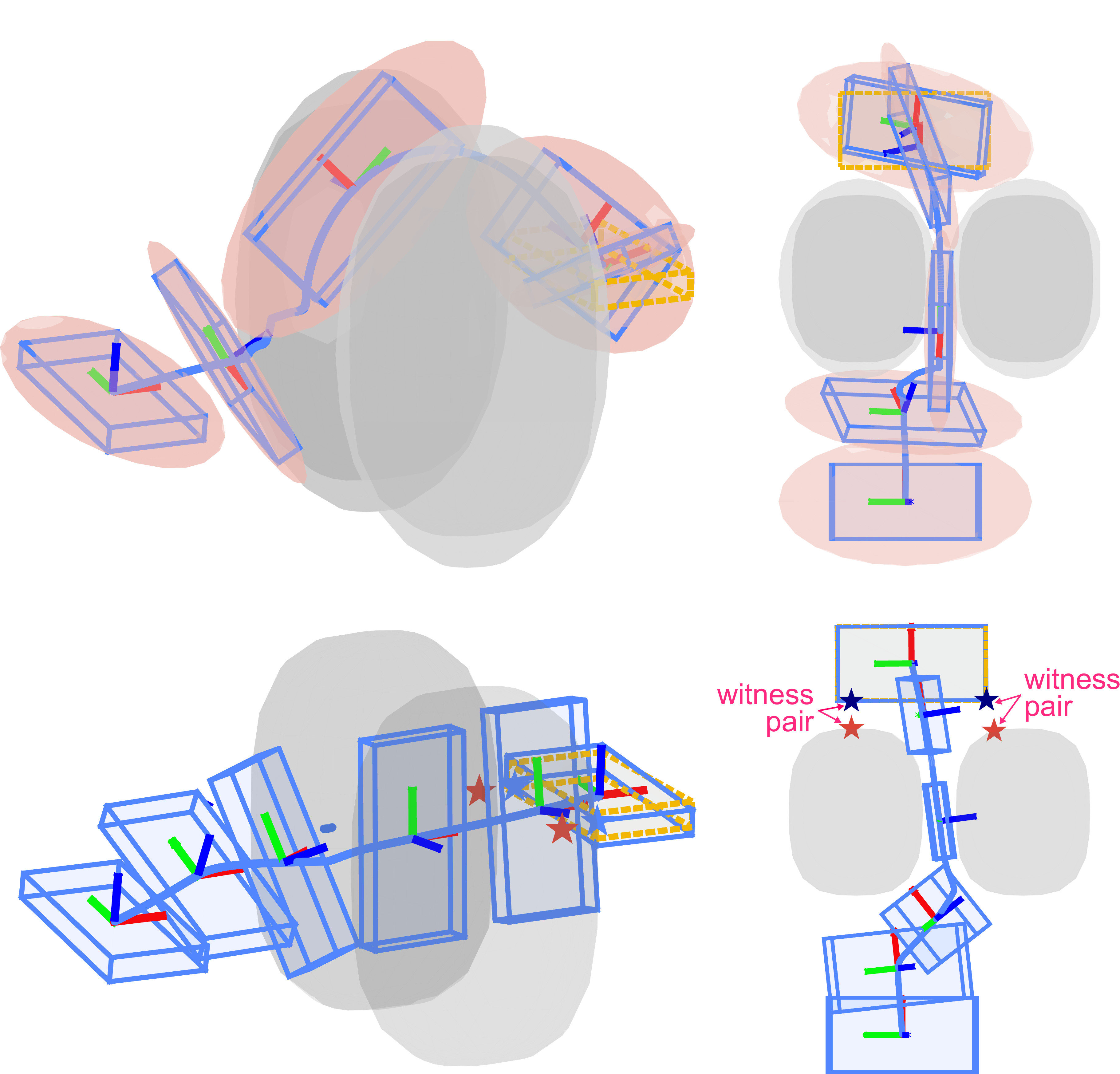}\vspace{0.2em}
	\caption{Side \textbf{(left)} and top \textbf{(right)} views of a polytopic robot (blue) traversing a vertical slit in 3D. \textbf{(Top)} Conservative enclosing-ellipsoid approximations render the direct passage through the slit infeasible, forcing the robot to slightly detour above the obstacles \cite{Funada.etal.TCST24}.
    \textbf{(Bottom)} In contrast, the proposed \gjkcbf~operates directly on the original convex geometries and successfully guides the robot through. Its construction and gradient computation require only the pair of closest points---{witness pair}---between two convex bodies {\it without using differentiable optimization}. Only the initial and goal poses are given, with no intermediate poses or waypoints.
    }\label{fig: teaser}
\end{figure}

Control Barrier Functions (CBFs) \cite{Wieland.Allgower.IFAC07}, on the other hand, provide a systematic approach to synthesizing safe controllers. CBF safety constraints can be implemented on-the-fly, either jointly with Control Lyapunov Functions (CLFs) \cite{Ames.etal.ECC19} or as minimally invasive safety filters to simultaneously address safety and stabilization objectives.
Several CBF formulations have been developed for collision avoidance, and we focus only on those that explicitly account for the geometries of both bodies. Optimization-based approaches solve an optimization problem to compute a measure of separation,
such as a scaling factor \cite{Dai.etal.RAL23}, minimum distance \cite{Thirugnanam.etal.arXiv23,Fernandez.etal.RAL26}, smooth distance approximation \cite{Usevitch.Sahleen.ACC25}, or exact Signed Distance (SD) \cite{Chen.etal.arXiv26}. This measure is then used to define a CBF, and its gradient is obtained through differentiable optimization (diffOpt) \cite{Amos.Kolter.ICML17}. A similar workflow has also been applied to uncertain obstacles \cite{Matias.Silvestre.arXiv26} using constrained convex generators. In contrast, optimization-free approaches construct CBFs based on polytope vertices \cite{Molnar.CCTA25}, supporting hyperplanes for ellipsoids \cite{Funada.etal.TCST24} and polytopes \cite{Wu.etal.TCB25}, or Bernstein-polynomial SD fields \cite{Jo.etal.26}. Such approaches are computationally efficient, but generally rely on surrogate distance measures.

These two \fixed{classes} of methods appear to suggest a trade-off between geometric exactness and computational efficiency. 
However, although the SD sensitivity in our previous work \cite{Chen.etal.arXiv26} was derived through diffOpt, the resulting gradient expression was independent of the sensitivity of the optimizer but depended only on the optimizer itself, i.e., the witness pair (the pair of points realizing minimum distance when separated or penetration depth when intersecting).
Motivated by this, we propose \gjkcbf\footnote{Although the name suggests the use of GJK, the formulation only requires a witness pair, which may also be obtained via optimization.}, a CBF framework that retains the exact distance and obtains its gradient directly from the (instantaneous) witness pair without diffOpt (Fig.~\ref{fig: teaser}).
 

Related witness-pair-based distance gradients have appeared in earlier studies \cite{Choset.Lee.TRM01,Lee.Choset.IJRR05,Schulman.etal.IJRR14,Escande.etal.TRO14,Li.etal.IROS23}. Proximity queries like GJK have been incorporated into real-time CBF safety filters, while obtaining their gradients through diffOpt with randomized smoothing \cite{Zhao.etal.arXiv26} or a Schur-complement acceleration \cite{Guo.etal.IoT26}.
Another related work is \cite{Funada.etal.TCST24}, which designs CBFs using rotating supporting hyperplanes and derives a closed-form gradient by exploiting the special algebraic structure of ellipsoids, but does not directly generalize to other geometries.
The most closely related work to our approach is \cite{Wang.etal.CCDC25}, which formulates the distance-based CBF directly on $SE(3)$, but does not provide an explicit form for the distance gradient. 

\fixed{To the best of our knowledge, existing approaches do not provide a CBF formulation with an explicit expression for the exact distance gradient for general convex rigid bodies. To address this gap, we develop \gjkcbf, in which, once a witness pair is given, the exact distance is immediately determined, and its gradient can be obtained accordingly.}
The main contributions of this paper are as follows:
\begin{itemize}

    \item The proposed \gjkcbf~is geometry-agnostic within the class of convex rigid bodies and applies in both 2D and 3D workspaces, provided that at least one body in each collision pair is strictly convex \fixed{(Def.~\ref{def: strictly-conv})}. 
    \removeForSpace{\calin{I had this comment before: not sure what is a strictly convex set means. Strictly convex is usually used for functions} \yihsuan{I added the link to definition of strictly convex set.}}

    \item We derive an explicit expression for the distance gradient on $SE(3)$ solely from the witness pair without diffOpt, enabling optimization-based distance solvers to be replaced by GJK proximity queries. We also provide equivalent Euler-angle and quaternion representations.
    

    \item We show that the proposed CBF framework can be implemented in a decentralized fashion, assuming that the information associated with each collision pair is accessible to the corresponding robots.

    \item We validate our method across different 2D/3D scenarios and compare it 
    with the supporting-hyperplane-CBF 
    \cite{Funada.etal.TCST24}, \fixed{showing that our method is less conservative while remaining collision-free.}
\end{itemize}
The rest of this paper is organized as follows. Section~\ref{sec: prelim} provides preliminaries and Section~\ref{sec: prob} introduces the problem formulation. Section~\ref{sec: main} contains our main results on the proposed \gjkcbf. Section~\ref{sec: simu} presents simulation results and Section~\ref{sec: conc} concludes the paper.

\section{Preliminaries}\label{sec: prelim}
In this section, we provide background on CBFs (\ref{subsec: prelim_cbf}) and rigid-body kinematics (\ref{subsec: prelim_rbm}) to the extent needed for later development of our main results.

\subsection{Control Barrier Functions (CBFs)}\label{subsec: prelim_cbf}
Consider a nonlinear system in the control-affine form:
\begin{align}
    \dot{\bm{x}}=\mathbf{f}(\bm{x})+\mathbf{g}(\bm{x})\bm{u},\label{eq: ctrl-affine}
\end{align}
where $\bm{x}\in \mathbb{R}^n$ is the state, $\bm{u}\in\mathcal{U}\subseteq\mathbb{R}^m$ is the input, with $\mathcal{U}$ denoting a closed control constraint containing the origin,
and $\mathbf{f}:\mathbb{R}^n\rightarrow \mathbb{R}^n$, $\mathbf{g}:\mathbb{R}^n\rightarrow \mathbb{R}^{n\times m}$ are locally Lipschitz.

\begin{defi}[\it Forward invariant set \cite{Ames.etal.ECC19}]
    \normalfont A set $\mathcal{S}\subset\mathbb{R}^n$ is forward invariant for \eqref{eq: ctrl-affine} if, for every initial condition ${\bm{x}(0)\in \mathcal{S}}$, its unique solution satisfies $\bm{x}(t)\in \mathcal{S}$ for all time.
\end{defi}

Let $\mathcal{S}$ be the safe set, defined as the 0-superlevel set of a continuously differentiable function
$h:\mathbb{R}^n\rightarrow\mathbb{R}$
:
\begin{align}\begin{split} 
    \mathcal{S} &= \{\bm{x}\in \mathbb{R}^n:h(\bm{x})\geq 0 \}.
    \label{eq: safeset_def}
\end{split}\end{align}

\begin{defi}[\it CBF \cite{Ames.etal.ECC19}]
    \normalfont
    Given a set $\mathcal{S}$ as in \eqref{eq: safeset_def}, $h(\bm{x})$ is a control barrier function (CBF) for \eqref{eq: ctrl-affine} if there exists an extended class $\mathcal{K}$ function\footnote{%
    An extended class $\mathcal{K}$ function is a function $\alpha:\mathbb{R}\rightarrow\mathbb{R}$ that is strictly increasing and $\alpha(0)=0$, see, e.g., \cite{Ames.etal.ECC19}.
    } 
    $\alpha$ such that
    \begin{align}\begin{split}
    \textstyle\sup_{\bm{u}\in\mathcal{U}}  \dot{h}(\bm{x},\bm{u})&=\textstyle\sup_{\bm{u}\in\mathcal{U}}      [L_\mathbf{f}h(\bm{x})+L_\mathbf{g}h(\bm{x})\bm{u}]\\
    &\geq-\alpha(h(\bm{x})),\quad\forall\bm{x}\in \mathcal{S},
    \label{eq: cbf_constr}
    \end{split}\end{align}
    where 
    $L_\mathbf{f}(\cdot), L_\mathbf{g}(\cdot)$ denote the Lie derivatives along $\mathbf{f}$ and $\mathbf{g}$, respectively. 
    \label{def: cbf}
\end{defi}

\begin{thm}[\!\cite{Ames.etal.ECC19}]
    \normalfont Given a CBF $h$ with the associated safe set $\mathcal{S}$ from \eqref{eq: safeset_def}, any locally Lipschitz continuous controller $\bm{u}:\mathbb{R}^n\to\mathcal{U}$ that satisfies $\dot{h}(\bm{x},\bm{u})\geq-\alpha(h(\bm{x}))$, $\forall\bm{x}\in\mathcal {S}$
    renders the set $\mathcal{S}$ forward invariant for \eqref{eq: ctrl-affine}.
    \label{thm: safety-guarantee}
\end{thm}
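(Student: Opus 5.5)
The argument is a comparison-lemma argument applied along the closed-loop trajectory; invariance of $\mathcal{S}$ follows from Nagumo-type reasoning. Under the locally Lipschitz controller $\bm{u}(\bm{x})$, the closed-loop vector field $\mathbf{f}(\bm{x})+\mathbf{g}(\bm{x})\bm{u}(\bm{x})$ is locally Lipschitz, because $\mathbf{f}$ and $\mathbf{g}$ are locally Lipschitz and products and sums of such maps remain locally Lipschitz on bounded sets. I will therefore first invoke Picard--Lindel\"of to get a unique solution $\bm{x}(t)$ on a maximal interval $[0,T_{\max})$ for any $\bm{x}(0)\in\mathcal{S}$. Since $h$ is continuously differentiable, $t\mapsto h(\bm{x}(t))$ is $C^1$ with derivative $\dot h(\bm{x}(t),\bm{u}(\bm{x}(t)))$.

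\textbf{Step 1: the differential inequality holds only on $\mathcal{S}$.} This is the main obstacle. The hypothesis $\dot h\ge-\alpha(h)$ is assumed only for $\bm{x}\in\mathcal{S}$, so I cannot apply the comparison lemma globally. Instead I argue by contradiction. Suppose the trajectory leaves $\mathcal{S}$, and let
\[
t^*=\sup\{t\in[0,T_{\max}):h(\bm{x}(s))\ge0\ \forall s\in[0,t]\}.
\]
By continuity $h(\bm{x}(t^*))=0$, and there exist times arbitrarily close after $t^*$ with $h<0$.

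\textbf{Step 2: the boundary case.} On $[0,t^*]$ we have $\dot h\ge-\alpha(h)$. At $t^*$ this gives $\dot h(t^*)\ge-\alpha(0)=0$. A nonnegative derivative alone does not rule out $h$ dipping below zero, for instance when $\dot h=0$ and higher-order terms are negative. To close this gap I will use a comparison argument on the interval $[0,t^*]$. The scalar ODE $\dot y=-\alpha(y)$ with $y(0)=h(\bm{x}(0))\ge0$ keeps $y\ge0$, since $y\equiv0$ is an equilibrium. Uniqueness is needed here; I will assume $\alpha$ is locally Lipschitz, as is standard in \cite{Ames.etal.ECC19}, or else fall back on the minimal-solution form of the comparison lemma. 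The comparison lemma then gives $h(\bm{x}(t))\ge y(t)\ge0$.

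The cleaner route that removes the gap is Nagumo's theorem (Brezis' version). I will check the subtangential condition on $\partial\mathcal{S}$. On the boundary $h=0$, and the hypothesis gives $\nabla h\cdot(\mathbf{f}+\mathbf{g}\bm{u})\ge0$. Provided $\nabla h\ne0$ on $\partial\mathcal{S}$, so that $0$ is a regular value, this yields $\mathbf{f}+\mathbf{g}\bm{u}\in T_{\mathcal{S}}(\bm{x})$, the Bouligand tangent cone. Combined with uniqueness of solutions from local Lipschitzness, Nagumo's theorem gives forward invariance directly. I expect to state the regular-value assumption explicitly, as the cited reference does implicitly.

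\textbf{Step 3: extend to all time.} Finally, I will note that invariance holds on $[0,T_{\max})$. If $T_{\max}<\infty$, the statement ``for all time'' is understood on the maximal interval of existence. Otherwise boundedness, for example from a compact $\mathcal{S}$, gives $T_{\max}=\infty$. This concludes the proof.
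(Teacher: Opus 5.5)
The paper gives no proof of this result; it is quoted from \cite{Ames.etal.ECC19}. Your final route is the standard argument behind that result, and it is sound. On $\partial\mathcal{S}$ the hypothesis gives $\nabla h\cdot(\mathbf{f}+\mathbf{g}\bm{u})\geq-\alpha(0)=0$. The condition $\nabla h\neq 0$ identifies the Bouligand tangent cone there with the half-space $\{\bm{v}:\nabla h(\bm{x})\cdot\bm{v}\geq 0\}$. Nagumo's theorem, together with uniqueness for the locally Lipschitz closed loop, then yields invariance on the maximal interval of existence.

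You should, however, delete the comparison paragraph in Step 2, because it closes nothing. The bound $h(\bm{x}(t))\geq 0$ on $[0,t^*]$ already holds by the definition of $t^*$. The comparison cannot be continued past $t^*$, since $\dot h\geq-\alpha(h)$ is assumed only on $\mathcal{S}$. It is therefore unavailable precisely on $(t^*,t^*+\varepsilon)$, where $h$ would need to be controlled.

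The regular-value hypothesis is also not something the cited source assumes implicitly. Its theorem lists $\partial h/\partial\bm{x}\neq 0$ on $\partial\mathcal{S}$ as an explicit hypothesis, and that hypothesis cannot be dropped: with the inequality imposed only on $\mathcal{S}$, the statement as printed in this paper is false. Take $n=1$, $\mathbf{g}\equiv 0$, $\mathbf{f}\equiv 1$, and $h(x)=x^2(x-1)$, so that $\mathcal{S}=\{0\}\cup[1,\infty)$.
\begin{itemize}
\item At $x=0$ one has $\dot h=0=-\alpha(0)$.
\item On $[1,\infty)$ one has $\dot h=x(3x-2)>0$.
\item Hence $h$ is a CBF in the sense of Definition~\ref{def: cbf}, and $\bm{u}\equiv 0$ satisfies the hypothesis.
\item Yet $x(0)=0$ gives $x(t)=t\notin\mathcal{S}$ for $t\in(0,1)$.
\end{itemize}
So adding the assumption is correct, and what you prove is the corrected theorem.

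The other standard repair is to require $\dot h\geq-\alpha(h)$ on an open set $D\supset\mathcal{S}$. Then your Step 1 contradiction closes in one line, with no regular-value condition and no Lipschitz assumption on $\alpha$. Pick $t_1$ slightly after $t^*$ with $h(\bm{x}(t_1))<0$ and $\bm{x}([t^*,t_1])\subset D$, and let $t_0$ be the last zero of $h(\bm{x}(\cdot))$ before $t_1$. Then $\dot h\geq-\alpha(h)>0$ on $(t_0,t_1]$ forces $h(\bm{x}(t_1))>h(\bm{x}(t_0))=0$, a contradiction.

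The two routes trade off as follows. Nagumo needs information only on $\partial\mathcal{S}$ but pays with $\nabla h\neq 0$ there. The neighborhood version tolerates critical points of $h$ on $\partial\mathcal{S}$ but needs the inequality slightly outside $\mathcal{S}$.
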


Safe controllers can be synthesized by minimally modifying a given nominal controller, $\unom:\mathbb{R}^n\to\mathcal{U}$, subject to 
the CBF constraint derived from \eqref{eq: cbf_constr}.
For implementation, time $[0,T]$ is discretized into intervals of $\Delta t$. At each step, the state is treated as fixed at its value at the beginning of the interval, rendering the CBF constraints affine in $\bm{u}$. The control is applied over the interval and recomputed at the next step using the updated state. With the quadratic objective, this yields the following quadratic program (CBF-QP):
\begin{align}\begin{split}
    \bm{u}^*(\bm{x})&=\arg\min_{\bm{u}\in\mathcal{U}}\|\bm{u}-\unom(\bm{x})\|^2_{\mathbf{W}}
    \\ &\text{s.t.}\,\,\,\dot{h}(\bm{x},\bm{u})\geq-\alpha(h(\bm{x})),
    \label{eq: cbf-min-norm}
\end{split}\end{align}
where $\|\mathbf{x}\|_{\mathbf{W}}^2\triangleq\mathbf{x}^\top\mathbf{W}\mathbf{x}$, and $\mathbf{W}\succ0$ is a weighting matrix. 
\removeForSpace{\calin{it's not clear why the above is a QP. You need to say that time is discretized, state is kept constant at its value at the beginning of each interval, the constraints become affine, etc.}}

\subsection{Rigid Body Kinematics on Lie  Groups} \label{subsec: prelim_rbm}
Rigid-body motion consists of translation and rotation and can be naturally described by the Lie groups $SO(d)$ and $SE(d)$, with their Lie algebras $so(d)$ and $se(d)$, where $d\in\{2,3\}$ corresponds to 2D and 3D spaces.
To describe the configuration of a rigid body, we first introduce an inertial world frame $\mathcal{I}$ and a body-fixed frame $\mathcal{B}^i$ for rigid body $i$, whose origin can be arbitrarily chosen.
The configuration of rigid body $i$ with respect to $\mathcal{I}$ is described by ${g_i= (R_i,\bm{p}_i)=\begin{bmatrix} R_i & \bm{p}_i \\\bm{0} & 1 \\\end{bmatrix}\in SE(d)}
$, where ${R_i\in SO(d):=\{R\in\mathbb{R}^{d\times d}\mid RR^\top=\mathbf{I}_d,\det(R)=1\}}$ denotes its orientation and $\bm{p}_i\in\mathbb{R}^d$ its position (the origin of $\mathcal{B}^i$). The body velocity (twist) of rigid body $i$ with respect to $\mathcal{I}$, expressed in $\mathcal{B}^i$, is denoted by ${\bm{V}_i=[\bm{v}_i^\top, \bm{\omega}_i^\top]^\top}$
with ${\bm{v}_i \in\mathbb{R}^d}$ and ${\bm{\omega}_i\in\mathbb{R}^{(d(d-1))/2}}$ being the translational and angular velocities, respectively. The kinematics of rigid body $i$ are ${\dot{\bm{p}}_i = R_i\bm{v}_i,\,\,\,\dot{R}_i=R_i\widehat{\bm{\omega}}_i}$,
where 
the {\it hat} operator, $\widehat{(\cdot)}$, maps $\bm{\omega}_i$ from Euclidean space to Lie algebra $\widehat{\bm{\omega}}_i\in so(d)$, a vector space of all $d\times d$ skew-symmetric matrices. 

In 2D and 3D, the angular velocity is represented by a scalar $\omega\in\mathbb{R}$ and a vector $\bm{\omega}=[\omega_1,\omega_2,\omega_3]^\top\in\mathbb{R}^3$, respectively. The corresponding hat maps are
\begin{align}\label{eq: hat}
    \widehat\omega=\begin{bmatrix} 0 & -\omega \\ \omega & 0 \\ \end{bmatrix},\quad 
    \widehat{\bm{\omega}} = \begin{bmatrix}
                         0 & -\omega_3 & \omega_2 \\
                        \omega_3 & 0 & -\omega_1 \\
                        -\omega_2 & \omega_1 & 0 \\
                        \end{bmatrix}.
\end{align}
Equivalently, the kinematics can be written in homogeneous form as
\begin{align}
    \dot{g}_i = g_i \widehat {\bm{V}}_i,\quad\widehat {\bm{V}}_i:=\begin{bmatrix} \widehat{\bm{\omega}}_i & \bm{v}_i \\\bm{0} & 0 \\\end{bmatrix}\in se(d)
    \label{eq: rigid-body-kine-homo-rep}.
\end{align}
We refer interested readers to \cite{Li.etal.book94,solà.etal.arXiv21} for more details. 

\section{Problem Formulation}\label{sec: prob}

\removeForSpace{\calin{This whole section needs some rewriting. 
\begin{itemize} 
\item The distinction between the ``controlled" rigid bodies and obstacles is not clear. I would call the controlled rigid bodies robots, or robots, which you do sometimes but you're not consistent. Just say that we have a set of robots (robots) modeled as rigid bodies with dynamics given by (7) and a set of obstacles that are static rigid bodies. And you should have this at the beginning of the section. \yihsuan{i replaced all the agents with robots.}
\item are there any bounds on the controls? This is usually central to CBF-based methods. \yihsuan{i did not consider any control bounds to avoid infeasibility issues.}
\item The assumption of the strict convexity is a bit complicated and confusing. Can we say that the robots are strictly convex? I think this actually might cover the assumption that you need \yihsuan{I rewrote the assumptions and hopefully it is clear now}
\item where does the nominal control come from? you might want to say something about this \yihsuan{I added this in the assumptions too}
\end{itemize}
}
\yihsuan{I rewrote the following to make the assumptions and problem formulation clearer.}
}

This work considers collision avoidance among ${N\geq1}$ robots and ${M\geq0}$ obstacles, each modeled as a convex rigid body. Their respective index sets are
${\mathcal{N}:=\{1,\ldots,N\}}$ and ${\mathcal{M}:=\{1,\ldots,M\}}$, with ${\mathcal{M}=\emptyset}$ for $M=0$. The workspace ${\mathcal{W}\subset\mathbb{R}^d}$, ${d\in\{2,3\}}$, is the physical (2D or 3D Euclidean) environment in which the robots move and avoid obstacles. The regions occupied by robot $i$ and obstacle $o$ are denoted by $\mathcal{A}_i(g_i)\subset\mathcal{W}$ and $\mathcal{O}_o\subset\mathcal{W}$, respectively.
Unless otherwise specified, we use ${\bm{x}_i=g_i\in SE(d)}$ as the state and $\bm{u}_i=\bm{V}_i$ as the input for robot $i\in\mathcal{N}$.
Alternative Euler-angle and quaternion state representations for 3D
are 
${\bm{x}_i^E\in\mathbb{R}^6}$ and ${\bm{x}_i^Q\in\mathbb{R}^7}$, respectively. Before stating the assumptions, we recall the notions of convex and strictly convex sets.

\begin{defi}[\it Convex and Strictly Convex Sets \cite{Escande.etal.TRO14}]
\label{def: strictly-conv}
    A set $\mathcal{C}$ is {\it convex} if the point $\mathbf{p}=\lambda \mathbf{x}+ (1-\lambda)\mathbf{y}$ 
    lies in $\mathcal{C}$ for any $\mathbf{x},\mathbf{y}\in\mathcal{C}$ and any $\lambda\in[0,1]$, and {\it strictly convex} if, for any $\mathbf{x}\neq\mathbf{y}$ and any $\lambda\in(0,1)$, $\mathbf{p}$ lies in  the interior of $\mathcal{C}$.
\end{defi}

Our problem is defined under the following assumptions:
\begin{enumerate}[label=(\roman*)]
    \item The occupied sets $\mathcal{A}_i(\bm{x}_i)$ of each robot $i\in\mathcal{N}$ and $\mathcal{O}_o$ of each obstacle $o\in\mathcal{M}$, are compact and convex with nonempty interiors; all obstacles are static.
    \item For every collision pair, whether robot-robot or robot-obstacle, at least one of the two sets is strictly convex.\label{assump: conv}
    \item Each robot has access to the current states of all other robots it must avoid, as well as information about the obstacles; inter-robot communication is perfect, i.e., no delays or packet losses.\label{assump: commu}

    \item Each robot is provided with a nominal go-to-goal controller $\unom_i,~\forall i\in \mathcal{N}$, which is not necessarily safe.
\end{enumerate}
The collision avoidance problem considered in this work is formulated as follows:
\begin{prob}
    \normalfont
    Consider a workspace $\mathcal{W}$ containing ${M\geq0}$ obstacles and ${N\geq1}$ robots. Each robot ${i\in\mathcal{N}}$ evolves according to \eqref{eq: rigid-body-kine-homo-rep} with a convex polyhedral admissible control set ${\mathcal{U}_i}$. 
    Given collision-free initial and goal configurations, ${\{\bm{x}^0_i\}}$ and ${\{\bm{x}^d_i\}}$,
    and nominal controllers ${\{\unom_i\}}$, design safe controllers ${\{\bm{u}_i\in\mathcal{U}_i\}}$ that minimally modify ${\{\unom_i\}}$ while avoiding collisions with one another and all obstacles.
\end{prob}


\section{Main Results}\label{sec: main}
This section presents the proposed \gjkcbf. We first establish its differentiability and derive its explicit gradient on $SE(d)$ (\ref{subsec: min-dist}) and then give equivalent forms in the commonly used Euler angles and quaternions
(\ref{subsec: alt-hdot}). Finally, we present a \fixed{decentralized} implementation, in the sense that each robot independently solves for its own control (\ref{subsec: decentralized}).
\removeForSpace{\calin{not clear what distributed means. In the previous section you said that each robot has access to all information.}\yihsuan{I changed the wording from distributed to decentralized.}}

\subsection{Minimum Distance as a CBF}
\label{subsec: min-dist}
Collision avoidance between two rigid bodies can be characterized by maintaining a positive (minimum) distance between their occupied sets. \fixed{We first consider a pair of robots ${i,j\in\mathcal{N},i\neq j}$.}
\removeForSpace{\calin{not sure why you only consider robots. This should apply to robot-obstacle pairs, right? \yihsuan{yes, i added one sentence after (8-9) to explain this.}}}
Let ${\mathcal{G}_{ij} := (g_i,g_j)}$. The pair of closest points ${(\bm{a}_i^*(\mathcal{G}_{ij}),\bm{a}_j^*(\mathcal{G}_{ij}))}$, expressed in $\mathcal{I}$, is referred to as the witness pair ${\mathbf{w}(\mathcal{G}_{ij})}$, defined 
alongside the distance as
\begin{align}
&\mathbf{w}(\mathcal{G}_{ij}):=(\bm{a}_i^*,\bm{a}_j^*)=\arg\min_{\substack{
\bm{a}_i\in\mathcal{A}_i(g_i)\\
\bm{a}_j\in\mathcal{A}_j(g_j)
}}\|\bm{a}_i-\bm{a}_j\|^2_2,\label{eq: wp}
\\
&\mathrm{dist}(\mathbf{w}(\mathcal{G}_{ij}))=\|\bm{a}^*_i-\bm{a}^*_j\|_2.\label{eq: dist}
\end{align}
\fixed{The robot-obstacle case follows analogously from \eqref{eq: wp}-\eqref{eq: dist} by replacing $\mathcal{A}_j(g_j)$ with $\mathcal{O}_o$ and omitting the dependence on $g_j$.}
We then propose the following CBF candidate:
\begin{align}\begin{split} 
    h(\mathcal{G}_{ij})=\|\bm{a}^*_i-\bm{a}^*_j\|_2-d_\mathrm{safe},
    \label{eq: h_dist}
\end{split}\end{align}
where $d_\mathrm{safe}>0$ is a user-defined safety margin. \fixed{For brevity, we denote dependence on $\mathcal{G}_{ij}$
by the subscript ${ij}$, i.e., ${(\cdot)_{ij}=(\cdot)(\mathcal{G}_{ij})}$, when clear from context.} The associated safe set is given by $\mathcal{S}_{ij} =\{\mathcal{G}_{ij}\in SE(d)\times SE(d):h_{ij}\geq 0 \}$.  

Although the witness pair $\mathbf{w}_{ij}$ is defined through the optimization in \eqref{eq: wp}, our formulation requires only $\mathbf{w}_{ij}$ itself to compute the gradient of \eqref{eq: h_dist} (see Theorem~\ref{thm: hdot}), without dual variables as in diffOpt-based methods \cite{Thirugnanam.etal.arXiv23,Fernandez.etal.RAL26,Usevitch.Sahleen.ACC25}.
Thus, $\mathbf{w}_{ij}$ can instead be obtained using proximity-query algorithms such as GJK \cite{gjk}, which applies to general convex geometries. Alternatively, for particular set representations, \eqref{eq: wp} can be formulated as a convex program, e.g., a QP for polytopes, a quadratically constrained QP (QCQP) for ellipsoids, or a general convex nonlinear program for sets defined by nonlinear convex constraints.
It is worth noting that computing $\mathbf{w}_{ij}$ does not depend on Assumption~\ref{assump: conv}, which is required only to ensure the differentiability of the proposed CBF \eqref{eq: h_dist}, as established next.
\begin{lemma}\label{lem: c1}
    Under Assumption~\ref{assump: conv}, for ${\mathcal{G}_{ij}\in\mathcal{S}_{ij}}$,
    the witness pair $\mathbf{w}_{ij}$ \eqref{eq: wp} is unique and continuous, and the CBF $h_{ij}$ in \eqref{eq: h_dist} is continuously differentiable ($C^1$) with respect to $\mathcal{G}_{ij}$.
\end{lemma}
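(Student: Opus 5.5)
The plan is to write the distance as the distance from the origin to the Minkowski difference, and then use the projection onto a convex set. Set $\mathcal{D}_{ij}:=\mathcal{A}_i(g_i)-\mathcal{A}_j(g_j)$, which is compact and convex. On $\mathcal{S}_{ij}$ we have $\mathrm{dist}_{ij}\geq d_\mathrm{safe}>0$, so $\bm{0}\notin\mathcal{D}_{ij}$. The difference vector $\bm{z}^*=\bm{a}_i^*-\bm{a}_j^*$ is the Euclidean projection of $\bm{0}$ onto $\mathcal{D}_{ij}$. It is therefore unique, with $\bm{n}:=\bm{z}^*/\|\bm{z}^*\|$ well defined. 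Any two minimizing pairs must share the same difference $\bm{z}^*$. So $\bm{a}_i^*$ lies on the face of $\mathcal{A}_i$ exposed by $-\bm{n}$, $\bm{a}_j^*$ lies on the face of $\mathcal{A}_j$ exposed by $+\bm{n}$, and the two faces are translates of each other by $\bm{z}^*$.

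\textbf{Uniqueness.} By Assumption~\ref{assump: conv}, one body, say $\mathcal{A}_i$, is strictly convex. Definition~\ref{def: strictly-conv} implies that every exposed face of $\mathcal{A}_i$ is a single point: if a supporting hyperplane touched two points, their midpoint would be on the boundary, which is a contradiction. Hence $\bm{a}_i^*$ is unique and $\bm{a}_j^*=\bm{a}_i^*-\bm{z}^*$ is unique too.

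\textbf{Continuity.} Because the bodies are rigid, $\mathcal{A}_i(g_i)=R_i\mathcal{A}_i^0+\bm{p}_i$ varies continuously in the Hausdorff metric with $g_i$, and so does $\mathcal{D}_{ij}$. The feasible-set map in \eqref{eq: wp} is then continuous and compact-valued, and the objective is continuous. Berge's maximum theorem gives an upper-hemicontinuous argmin correspondence, and since it is single-valued, the witness map $\mathbf{w}_{ij}$ is continuous.

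\textbf{Differentiability.} For differentiability I would use a Danskin/envelope argument rather than the implicit function theorem, because no smoothness of the boundaries is assumed. Parametrize the feasible points in body coordinates, $\bm{a}_i=R_i\bm{b}_i+\bm{p}_i$ with $\bm{b}_i\in\mathcal{A}_i^0$ a fixed compact set, and similarly for $j$. The constraint set then no longer depends on $\mathcal{G}_{ij}$, and the objective $f(\mathcal{G}_{ij},\bm{b}_i,\bm{b}_j)=\|R_i\bm{b}_i+\bm{p}_i-R_j\bm{b}_j-\bm{p}_j\|^2$ is smooth in $\mathcal{G}_{ij}$, with the derivative continuous jointly in all variables. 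Danskin's theorem says the value function $\phi=\mathrm{dist}^2$ is directionally differentiable, with derivative equal to the minimum of $\partial_{\mathcal{G}}f$ over the argmin set. The argmin is a singleton by uniqueness, so $\phi$ is differentiable. Its derivative along a twist $(\bm{V}_i,\bm{V}_j)$ is $2\bm{z}^{*\top}\big(R_i(\widehat{\bm{\omega}}_i\bm{b}_i^*+\bm{v}_i)-R_j(\widehat{\bm{\omega}}_j\bm{b}_j^*+\bm{v}_j)\big)$, which by the continuity step is continuous in $\mathcal{G}_{ij}$; hence $\phi\in C^1$. Since $\phi\geq d_\mathrm{safe}^2>0$ on $\mathcal{S}_{ij}$, the function $h_{ij}=\sqrt{\phi}-d_\mathrm{safe}$ is $C^1$ there. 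In fact it is $C^1$ on an open neighborhood where the distance is positive, which is what the CBF setting requires. The same expression is what Theorem~\ref{thm: hdot} will exploit. The robot--obstacle case is the same argument with $g_j$ frozen.

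\textbf{Main obstacle.} The delicate step is the differentiability argument. Danskin's theorem requires compactness of the fixed parameter set (we have it), continuity of $\partial_{\mathcal{G}}f$, and uniqueness of the minimizer in the \emph{parametrized} variables $(\bm{b}_i,\bm{b}_j)$. That last requirement is exactly where Assumption~\ref{assump: conv} enters. If both bodies were merely convex, for example two parallel faces, the argmin would be a segment and $\phi$ could have a kink under rotation. I would note this example to justify the assumption.
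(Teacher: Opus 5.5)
Your proof is correct, but it takes a different route from the paper. The paper does not argue the lemma itself. It invokes \cite[Sec.~IV-B]{Escande.etal.TRO14} for uniqueness, continuity and $C^1$ regularity as functions of the relative configuration $g_{\mathrm{rel}}=g_i^{-1}g_j$. It then transfers these properties to $\mathcal{G}_{ij}$, because $g_{\mathrm{rel}}$ and the map back to $\mathcal{I}$ depend smoothly on $\mathcal{G}_{ij}$.

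You instead give a self-contained argument:
\begin{itemize}
\item the projection of the origin onto the Minkowski difference gives a unique $\bm{a}_i^*-\bm{a}_j^*$;
\item exposed faces of a strictly convex set are singletons, which gives uniqueness of the witness pair;
\item Berge's maximum theorem gives continuity;
\item Danskin's theorem in body coordinates, where the feasible set no longer depends on $\mathcal{G}_{ij}$, gives differentiability, with a continuous derivative because the minimizer is continuous.
\end{itemize}

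What your route buys is transparency about where each hypothesis enters. Compactness is used for Berge and Danskin. The condition $d_{\mathrm{safe}}>0$ is used for the square root and for $\mathbf{n}_{ij}$. Strict convexity is used only to make the argmin a singleton. Your route also avoids the detour through $g_{\mathrm{rel}}$.

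It also produces the gradient formula of Theorem~\ref{thm: hdot} directly. The paper's proof of that theorem, with its two optimality inequalities and the squeeze argument, is a hand-proved special case of your Danskin step. So your proposal effectively proves the lemma and the theorem together. The paper's version buys brevity at the cost of deferring the substance to the reference.

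One small inaccuracy, which you never use: the exposed faces $F_i$ and $F_j$ need not be translates of each other, since a vertex can face a facet. What is true is that the set of minimizing pairs is $\{(\bm{a},\bm{a}-\bm{z}^*):\bm{a}\in F_i\cap(F_j+\bm{z}^*)\}$, and that is all your uniqueness argument needs.
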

\begin{proof}
    The uniqueness and continuity of $\mathbf{w}_{ij}$ in \eqref{eq: wp} and the $C^1$ regularity of the CBF in \eqref{eq: h_dist} with respect to the relative configuration $g_{\mathrm{rel}} = g_i^{-1} g_j$ follow directly from \cite[Sec.~IV-B]{Escande.etal.TRO14}. Since $g_{\mathrm{rel}}$ and the transformation to $\mathcal{I}$ depend smoothly on $\mathcal{G}_{ij}$, these properties also hold with respect to $\mathcal{G}_{ij}$.
\end{proof}

We also remark that the standard CBF \cite{Ames.etal.ECC19} is defined with respect to systems evolving in Euclidean spaces in the form of \eqref{eq: ctrl-affine}. The proposed CBF \eqref{eq: h_dist}, however, is defined on a manifold of $SE(d)$. Recent works have extended CBF theory to manifolds and Lie groups \cite{De.etal.ACC26,Wang.etal.CCDC25,Lin.Egerstedt.arXiv26}. Forward invariance of the safe set can similarly be established by 
requiring the time derivative of $h_{ij}$ 
along the system trajectories to satisfy $\dot{h}_{ij}\geq-\alpha(h_{ij})$
\cite{Funada.etal.TCST24,De.etal.ACC26}. Before giving the result for $\dot{h}_{ij}$, we introduce the operator $(\cdot)^\perp$ for ${\bm{a}=[a_1,\ldots,a_d]^\top\in\mathbb{R}^d}$:
\begin{align}
    \bm{a}^\perp
    := 
    \begin{cases}
        [-a_2, a_1]^\top, & d=2\\
        -\widehat{\bm{a}}, & d=3
    \end{cases}.
\end{align}
Here,   ${\bm{a}^\perp\in\mathbb{R}^{d\times d(d-1)/2}}$ satisfies ${\widehat{\bm{\omega}}\bm{a}=\bm{a}^\perp\bm{\omega}}$, with the hat operator defined in \eqref{eq: hat}. We now state the formula for $\dot{h}_{ij}$.
\begin{thm}\label{thm: hdot}
    \normalfont 
    Under Assumption~\ref{assump: conv}, for $\mathcal{G}_{ij}\in\mathcal{S}_{ij}$, the time derivative of the CBF \eqref{eq: h_dist} can be computed solely from the witness pair $\mathbf{w}_{ij}=(\bm{a}_i^*,\bm{a}_j^*)$, the current configurations $\mathcal{G}_{ij}$, and the body-velocity controls $\bm{u}_i$ and $\bm{u}_j$
    as 
    \begin{subequations}\begin{align}
        &\dot{h}_{ij}= \mathbf{J}_i \bm{u}_i +\mathbf{J}_j\bm{u}_j,
        \label{eq: hdot}
        \\[0.35em]
        &
        \mathbf{n}_{ij}=\frac{\bm{a}_i^*-\bm{a}_j^*}{\|\bm{a}_i^*-\bm{a}_j^*\|_2},\,\,\,\mathbf{n}_{ji}=-\mathbf{n}_{ij}\label{eq: nij}
        \\
        &\scalebox{0.88}{$\displaystyle(\mathbf{J}_i)^\top=
        \begin{bmatrix}
            R_i^\top\mathbf{n}_{ij}\\[0.2em] (R_i({\bm{\alpha}^*_i)^\perp})^\top\mathbf{n}_{ij}
        \end{bmatrix},\,\,
        (\mathbf{J}_j)^\top=
        \begin{bmatrix}
            R_j^\top\mathbf{n}_{ji}\\[0.2em] (R_j({\bm{\alpha}^*_j)^\perp})^\top\mathbf{n}_{ji}
        \end{bmatrix},$}
        \label{eq: Jaco}\\[0.35em]
        &\bm{\alpha}_k^*=R_k^\top(\bm{a}_k^*-\bm{p}_k),\,\,k\in\{i,j\}.\label{eq: alpha}
    \end{align}\label{eq: hdot_se3}\end{subequations}
    Here, ${\bm{p}_k\in\mathbb{R}^d}$ is the position of the origin of ${\mathcal{B}^k}$, expressed in $\mathcal{I}$, and
    ${\bm{\alpha}_k^*\in\mathbb{R}^d}$ is the vector from $\bm{p}_k$ to $\bm{a}_k^*$, expressed in $\mathcal{B}^k$. We refer to ${\mathbf{J}_k\in\mathbb{R}^{1\times(d+d(d-1)/2)}}$ 
    as the \emph{body-velocity Jacobian} associated with $h_{ij}$. Finally, $\mathbf{n}_{ij}\in\mathbb{R}^d$ is the unit vector from $\bm{a}_j^*$ to $\bm{a}_i^*$, and vice versa for $\mathbf{n}_{ji}$.
\end{thm}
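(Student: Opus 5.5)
The plan is an envelope-theorem (Danskin-type) argument carried out in body coordinates. By Lemma~\ref{lem: c1}, for $\mathcal{G}_{ij}\in\mathcal{S}_{ij}$ the witness pair is unique and continuous and $h_{ij}$ is $C^1$, so $\dot h_{ij}$ exists along any trajectory of \eqref{eq: rigid-body-kine-homo-rep}. Since $h_{ij}\ge 0$ and $d_\mathrm{safe}>0$, we have $\|\bm a_i^*-\bm a_j^*\|_2\ge d_\mathrm{safe}>0$, so $\mathbf n_{ij}$ in \eqref{eq: nij} is well defined.

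First I rewrite the optimization \eqref{eq: wp} in body-fixed coordinates. Let $\mathcal{A}_k^0$ denote the set in its own frame, so that $\mathcal{A}_k(g_k)=\{R_k\bm\alpha+\bm p_k:\bm\alpha\in\mathcal{A}_k^0\}$. The feasible set $\mathcal{A}_i^0\times\mathcal{A}_j^0$ then no longer depends on $\mathcal{G}_{ij}$. Define
\[
D(\mathcal{G}_{ij})=\min_{\bm\alpha_i\in\mathcal{A}_i^0,\ \bm\alpha_j\in\mathcal{A}_j^0}\phi(\bm\alpha_i,\bm\alpha_j,\mathcal{G}_{ij}),
\qquad
\phi=\|R_i\bm\alpha_i+\bm p_i-R_j\bm\alpha_j-\bm p_j\|_2 .
\]
Its unique minimizer is $(\bm\alpha_i^*,\bm\alpha_j^*)$ as in \eqref{eq: alpha}. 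Near the minimizer, $\phi$ is smooth in $\mathcal{G}_{ij}$ because the distance is positive.

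Next I apply Danskin's theorem: the feasible set is compact and fixed, $\phi$ is continuously differentiable in the parameter, and the minimizer is unique. Hence the derivative of $D$ along a curve $\mathcal{G}_{ij}(t)$ equals the partial derivative of $\phi$ in the parameter, evaluated at the frozen minimizer $(\bm\alpha_i^*,\bm\alpha_j^*)$. This step is the main obstacle, because it is where one must justify that the motion of the witness pair contributes nothing. I would try Danskin first. As a fallback I would use a direct first-order optimality argument: for a strictly convex body, $\dot{\bm a}_k^*$ is tangent to its supporting hyperplane with normal $\mathbf n_{ij}$, so $\mathbf n_{ij}^\top(\dot{\bm a}_i^{*,\mathrm{body}}-\dot{\bm a}_j^{*,\mathrm{body}})=0$. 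The fallback is more delicate when one body is a polytope and the witness point jumps between faces, which is why I prefer the envelope route and rely on the continuity from Lemma~\ref{lem: c1}.

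Finally I compute the frozen derivative. With $\bm\alpha_i^*$ held fixed, the material point satisfies $\frac{d}{dt}(R_i\bm\alpha_i^*+\bm p_i)=R_i\widehat{\bm\omega}_i\bm\alpha_i^*+R_i\bm v_i=R_i\bm v_i+R_i(\bm\alpha_i^*)^\perp\bm\omega_i$, using the identity $\widehat{\bm\omega}\bm a=\bm a^\perp\bm\omega$. Differentiating the norm gives the factor $\mathbf n_{ij}^\top$ applied to the difference of these velocities. The result is
\[
\dot h_{ij}=\mathbf n_{ij}^\top\bigl(R_i\bm v_i+R_i(\bm\alpha_i^*)^\perp\bm\omega_i\bigr)+\mathbf n_{ji}^\top\bigl(R_j\bm v_j+R_j(\bm\alpha_j^*)^\perp\bm\omega_j\bigr).
\]
Transposing each term gives exactly $\mathbf J_i\bm u_i+\mathbf J_j\bm u_j$ with $\mathbf J_k$ as in \eqref{eq: Jaco}, and this works for both $d=2$ and $d=3$. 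For the robot-obstacle case, the obstacle term vanishes because $\bm u_j=0$.
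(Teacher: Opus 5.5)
Your proposal is correct and is essentially the paper's proof. The paper likewise works over the fixed body-frame sets $\overline{\mathcal{A}}_i\times\overline{\mathcal{A}}_j$, shows that the derivative of the distance equals the partial derivative of $\Phi$ at the frozen minimizer $(\bm{\alpha}_i^*,\bm{\alpha}_j^*)$, and computes it exactly as you do; the one difference is that instead of citing Danskin's theorem it proves the unique-minimizer case inline, by sandwiching $\mathrm{dist}(\mathcal{G}(t+\Delta))-\mathrm{dist}(\mathcal{G}(t))$ between the increments of $\Phi$ at $\bm{z}^*_{t+\Delta}$ and $\bm{z}^*_t$ and using continuity of the minimizer. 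One small point: Danskin needs $\phi$ to be $C^1$ in the parameter on the whole feasible set, not just near the minimizer, and this holds because every feasible pair is at distance at least $D(\mathcal{G}_{ij})\ge d_\mathrm{safe}>0$, a bound that persists on a neighborhood by continuity of $D$.
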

\begin{proof} 
    Let $\mathcal{G}:=\mathcal{G}_{ij}$, $\overline{\mathcal{A}}_k$ denote the representation of rigid body $k$ in $\mathcal{B}^k$, and let $\bm{z}:=(\bm{\alpha}_i,\bm{\alpha}_j) \in \overline{\mathcal {A}}_i\times\overline{\mathcal{A}}_j$. Define $\Phi(\mathcal{G},\bm{z}):=\|\bm{p}_i+R_i\bm{\alpha}_i-\bm{p}_j-R_j\bm{\alpha}_j\|_2$. The minimum distance between two bodies is then $\mathrm{dist}(\mathcal{G})=\min_{\bm{z}}\Phi(\mathcal{G},\bm{z})$, with $\bm{z}^*(\mathcal{G}):=(\bm{\alpha}_i^*(\mathcal{G}),\bm{\alpha}_j^*(\mathcal{G}))$ denoting its unique minimizer by Lemma~\ref{lem: c1}. Along $\mathcal{G}(t)$, let $\bm{z}^*_t:=\bm{z}^*(\mathcal{G}(t))$. For any sufficiently small $\Delta>0$, optimality of $\bm{z}_{t+\Delta}^*$ at $\mathcal{G}(t+\Delta)$ gives $\Phi(\mathcal{G}(t+\Delta),\bm{z}_{t+\Delta}^*)\leq\Phi(\mathcal{G}(t+\Delta),\bm{z}_{t}^*)$, whereas optimality of $\bm{z}^*_t$ at $\mathcal{G}(t)$ gives $\Phi(\mathcal{G}(t),\bm{z}_{t}^*)\leq\Phi(\mathcal{G}(t),\bm{z}_{t+\Delta}^*)$. Since $\mathrm{dist}(\mathcal{G}(\tau))=\Phi(\mathcal{G}(\tau),\bm{z}^*_\tau)$, for every $\tau$, combining the two inequalities above yields
    \begin{align*}
        \Phi&(\mathcal{G}(t+\Delta),\bm{z}_{t+\Delta}^*)-\Phi(\mathcal{G}(t),\bm{z}_{t+\Delta}^*)\\
        &\leq\mathrm{dist}(\mathcal{G}(t+\Delta))-\mathrm{dist}(\mathcal{G}(t))\\
        &\leq \Phi(\mathcal{G}(t+\Delta),\bm{z}_t^*)-\Phi(\mathcal{G}(t),\bm{z}_t^*),
    \end{align*}
    By Lemma~\ref{lem: c1}, the unique minimizer is continuous, so that $\bm{z}_{t+\Delta}^*\to\bm{z}_t^*$ as $\Delta\to0$. Since $\Phi$ is $C^1$ with respect to $\mathcal{G}$ whenever $\mathrm{dist}(\mathcal{G})>0$, division by $\Delta$ (reversing the inequalities if $\Delta<0$), followed by the mean value theorem and the squeeze theorem, yields
    \begin{align*}
        \frac{d}{dt}\mathrm{dist}(\mathcal{G}(t))=\frac{\partial}{\partial t} \Phi(\mathcal{G}(t),\bm{z})\Big\vert_{\bm{z}=\bm{z}_t^*}.
    \end{align*}
    Since $d_\mathrm{safe}$ is constant, we evaluate the partial derivative above while holding fixed the instantaneous witness points $\bm{\alpha}^*_k$ defined in \eqref{eq: alpha}. Using \eqref{eq: nij}, we obtain
    \begin{align*}
        \dot{h}_{ij}=\frac{d}{dt}\mathrm{dist}(\mathcal{G}(t)
        )=\mathbf{n}_{ij}^
        \top\left( \dot{\bm{p}}_i+\dot{R}_i\bm{\alpha}^*_i-\dot{\bm{p}}_j-\dot{R}_j\bm{\alpha}^*_j
        \right).
    \end{align*}
    Substituting ${\dot{\bm{p}}_k=R_k\bm{v}_k}$ and ${\dot{R}_k=R_k\widehat{\bm{\omega}}_k}$ into the above equation and using ${\bm{n}^\top R\widehat{\bm{\omega}}\bm{\alpha}=\bm{n}^\top R\bm\alpha^\perp\bm\omega}$ and $\mathbf{n}_{ji}=-\mathbf{n}_{ij}$ yields $\dot{h}_{ij}=\mathbf{J}_i\bm{u}_i+\mathbf{J}_j\bm{u}_j$, with $\mathbf{J}_i$ and $\mathbf{J}_j$ given by \eqref{eq: Jaco}. This completes the proof.
\end{proof}

Since the rigid-body kinematics \eqref{eq: rigid-body-kine-homo-rep} are driftless and the zero input is admissible, choosing $\bm{u}_i=\bm{u}_j=\bm{0}$ yields $\dot{h}_{ij}=0\geq-\alpha(h_{ij})$ for every $\mathcal{G}_{ij}\in\mathcal{S}_{ij}$; hence, $h_{ij}$ satisfies the CBF condition \eqref{eq: cbf_constr}. By Theorem~\ref{thm: safety-guarantee}, any locally Lipschitz controller satisfying \eqref{eq: cbf_constr} renders $\mathcal{S}_{ij}$ forward invariant, thereby maintaining separation for all time.

\begin{remark}
\label{rem: obs}
For a robot-obstacle collision pair $(i,o)$, $i\in\mathcal{N}$, $o\in\mathcal{M}$, we denote its witness pair, CBF, safe set, and separation normal by $\mathbf w_{io}$, $h_{io}$, $\mathcal{S}_{io}$, and $\mathbf{n}_{io}$, respectively. Since the obstacle is static, its velocity term vanishes, and Theorem 2 reduces to $\dot h_{io}=\mathbf{J}_i\bm{u}_i$.
\end{remark}

In our previous work \cite{Chen.etal.arXiv26}, the 2D counterpart of Theorem~\ref{thm: hdot} was derived using diffOpt together with the Minkowski-operation property, revealing that the gradient depends only on the witness pair. This observation motivates the present 3D derivation based on rigid-body motion.
The same derivative expression also applies to the SD formulation in the intersecting case, provided the penetration witness pair is unique. Nonunique witness pairs introduce nonsmoothness, which was analyzed for 2D polygons in the same work; here, we focus on the $C^1$ setting established in Lemma~\ref{lem: c1}.

We note that for force- or torque-controlled systems, distance-based CBFs such as our \gjkcbf~have relative degree two, i.e., the control first appears in the second time derivative of the CBF. A direct extension requires a higher-order CBF \cite{Xiao.Belta.TAC22}, which involves the distance Hessian and thus sensitivity analysis of the optimal witness points $\bm a_k^*$ \cite{Montaut.etal.ICRA23,Wang.etal.CCDC25,Chen.etal.CDC26,Zimmermann.etal.IROS22}. Alternatively, specially designed CBFs, such as those proposed in \cite{Singletary.etal.LCSS22}, combine kinetic energy with the distance-based safety function to avoid explicit computation of the distance Hessian.




\subsection{Alternative Attitude Representations in 3D}\label{subsec: alt-hdot}
In 3D, Euler angles provide an intuitive and minimal parameterization but suffer from singularities, whereas quaternions avoid them at the cost of an additional parameter. We now present equivalent expressions of the \gjkcbf~derivative using Euler angles and quaternions. While using different state representations, we retain the body velocity ${\bm{u}_i=[\bm{v}_i^\top, \bm{\omega}_i^\top]^\top}$
as the control input.


\textbf{Euler angles.}
The state is
$\bm{x}_i^E=[\bm{p}_i^\top,\bm{\eta}_i^\top]^\top$, where $\bm{\eta}_i=[\phi_i,\theta_i,\psi_i]^\top\in\mathbb{R}^3$ denotes the roll, pitch, yaw angles, respectively. We adopt the common yaw-pitch-roll convention, i.e., 
$R_i=R_z(\psi_i)R_y(\theta_i)R_x(\phi_i)$,
with
\vspace{-0.5em}
\par\begingroup\small
\begin{align*}
        R_z = \begin{bmatrix}
              c & -s & 0\\
              s & c & 0 \\
                0 &    0 & 1
        \end{bmatrix},
        R_y = \begin{bmatrix}
              c & 0 & s\\
              0 & 1 & 0 \\
              -s & 0 & c
            \end{bmatrix},
        R_x = \begin{bmatrix}
              1 & 0 & 0\\
              0 & c & -s \\
              0 & s & c
            \end{bmatrix},
\end{align*}
\endgroup
where $c$ and $s$ denote the cosine and sine of each matrix's angle, respectively. The kinematics are given by:
\begin{subequations}\label{subeq: euler-rep}
    \begin{align}
        \dot{\bm{x}}_i^E &=\begin{bmatrix}
        R_i & \bm{0} \\
        \bm{0} & {T}(\bm{\eta}_i)^{-1}
        \end{bmatrix}\bm{u}_i\triangleq\mathbf{E}_i\bm{u}_i,\label{eq: euler-kine}\\[0.3em]
        {T}(\bm{\eta}_i) &= 
        \begin{bmatrix}
            1 & 0 & -\sin(\theta_i) \\
            0 & \cos(\phi_i)& \sin(\phi_i)\cos(\theta_i)\\ 
            0 &-\sin(\phi_i) & \cos(\phi_i)\cos(\theta_i)
        \end{bmatrix},\label{eq: euler-trans}
    \end{align}
\end{subequations}
Using \eqref{eq: euler-kine}, the time derivative of $h_{ij}$ in Euler-angle coordinates can be written as
\begin{align}
    \dot{h}_{ij}= \mathbf{J}_i^E\dot{\bm{x}}_i^E+\mathbf{J}_j^E\dot{\bm{x}}_j^E=\mathbf{J}_i^E\mathbf{E}_i\bm{u}_i+\mathbf{J}_j^E\mathbf{E}_j\bm{u}_j,
    \label{eq: hdot-euler}
\end{align}
where $\mathbf{J}_k^E=(\nabla_{\bm{x}_k^E}h_{ij})^\top$ denotes the Jacobian of $h_{ij}$ with respect to the Euler-angle coordinates of body $k$, for $k\in\{i,j\}$. By Theorem~\ref{thm: hdot}, it is given by
\begin{align}
    \mathbf{J}_k^E = \mathbf{J}_k\left(\mathbf{E}_k\right)^{-1}= \mathbf{J}_k\begin{bmatrix}
        R_k^\top & \bm{0} \\
        \bm{0} & {T}(\bm{\eta}_k)
        \end{bmatrix}.
    \label{eq: jaco-euler}
\end{align}
This expression is valid away from the Euler-angle singularity, i.e., for $\cos\theta_k\neq0$. A related result was previously obtained in~\cite[Appendix~A]{Choset.Lee.TRM01} for rod-shaped robots via a different derivation. Our result in \eqref{eq: jaco-euler} reduces to theirs when the roll angle is set to zero.

\textbf{Quaternion.} The same approach applies to the unit quaternion representation, with state ${\bm{x}_i^Q=[\bm{p}_i^\top,\bm{q}_i^\top]^\top}$, where ${\bm{q}_i=[q_{0_i}, \bm{q}_{v_i}^\top]^\top \in\mathbb{R}^4}$ and ${\|\bm{q}_i\|_2=1}$. The corresponding kinematics are:
\begin{subequations}\label{subeq: quat-rep}
    \begin{align}
        \dot{\bm{x}}_i^Q &=\begin{bmatrix}
        R_i & \bm{0} \\
        \bm{0} & \frac{1}{2}{G}(\bm{q_i})\end{bmatrix}\bm{u}_i\triangleq\mathbf{Q}_i\bm{u}_i,\label{eq: quat-kine}
        \\[0.3em]
        {G}(\bm{q}_i) &= 
        \begin{bmatrix}
        -\bm{q}_{v_i}^\top \\ q_{0_i}\mathbf{I}_3+\widehat{\bm{q}_{v_i}}
        \end{bmatrix},\label{eq: quat-trans}
    \end{align}
\end{subequations}
Similarly, using \eqref{eq: quat-kine}, the time derivative of $h_{ij}$ in quaternion coordinates can be written as
\begin{align}
    \dot{h}_{ij}= \mathbf{J}_i^Q\dot{\bm{x}}_i^Q+\mathbf{J}_j^Q\dot{\bm{x}}_j^Q=\mathbf{J}_i^Q\mathbf{Q}_i\bm{u}_i+\mathbf{J}_j^Q\mathbf{Q}_j\bm{u}_j,
    \label{eq: hdot-quat}
\end{align}
where $\mathbf{J}_k^Q$ denotes one valid quaternion-coordinate Jacobian associated with body $k$. Since $G(\bm{q}_k)^\top G(\bm{q}_k)=\mathbf{I}_3$ holds for unit quaternions, a left inverse of $\mathbf{Q}_k$ yields the following valid Jacobian via Theorem~\ref{thm: hdot}:
\begin{align}
    \mathbf{J}_k^Q=\mathbf{J}_k
    \begin{bmatrix}
    R_k^\top & \bm{0} \\
    \bm{0} &  2{G}(\bm{q_k})^\top\\ 
    \end{bmatrix}.
    \label{eq: jaco-quat}
\end{align}

In summary, depending on the chosen attitude representation, the CBF constraint can be equivalently imposed by evaluating $\dot{h}_{ij}$ using \eqref{eq: hdot_se3} for $SE(d)$ representation, or, for $d=3$, using \eqref{eq: hdot-euler}-\eqref{eq: jaco-euler} for Euler angles, or \eqref{eq: hdot-quat}-\eqref{eq: jaco-quat} for unit quaternions. In the subsequent decentralized CBF formulation and simulations, we adopt the default $SE(d)$ expression \eqref{eq: hdot_se3}.

\subsection{Decentralized Implementation}\label{subsec: decentralized}
With the CBF derivative available, we now synthesize safe control inputs by solving the safety-filter CBF-QP \eqref{eq: cbf-min-norm}. Under Assumption~\ref{assump: commu}, following \cite{Wang.etal.TRO17}, each pairwise CBF constraint can be split between the corresponding robots without compromising safety, yielding a decentralized implementation in which each robot solves for its own control input. To describe this allocation, consider a fixed robot pair $(i,j)$. Each robot independently computes the CBF value $h_{ij}$ from \eqref{eq: h_dist}
and its own body-velocity Jacobian $\mathbf{J}_k, k\in\{i,j\}$ using \eqref{eq: Jaco}. For this pair, their local CBF-QPs are:
\begin{subequations}
\begin{align}
    \bm{u}_i^*&=\arg\min_{\bm{u}_i\in\mathcal{U}_i}\|\bm{u}_i-\unom_{i}\|^2_{\mathbf{W}_i} \notag\\
    &\quad\text{s.t.}\,\,\,\mathbf{J}_i \bm{u}_i\geq-\beta_{ij}\alpha(h_{ij}), \label{eq: i-cbf-qp}\\[0.3em]
    \bm{u}_j^*&=\arg\min_{\bm{u}_j\in\mathcal{U}_j}\|\bm{u}_j-\unom_{j}\|^2_{\mathbf{W}_j} \notag\\
    &\quad\text{s.t.}\,\,\,\mathbf{J}_j \bm{u}_j\geq-(1-\beta_{ij})\alpha(h_{ij}), \label{eq: j-cbf-qp}
\end{align}\label{eq: decentralized-cbf-qp}\end{subequations}
where ${0\leq\beta_{ij}\leq1}$ is the user-defined parameter that allocates the avoidance responsibility within the pair and may be chosen according to their control capabilities. 

Summing the two constraints in
\eqref{eq: i-cbf-qp}-\eqref{eq: j-cbf-qp} and using \eqref{eq: hdot} yields ${\mathbf{J}_i\bm{u}_i+\mathbf{J}_j\bm{u}_j=\dot{h}_{ij}\geq-\alpha(h_{ij})}$. That is, under perfect communication in Assumption~\ref{assump: commu}, the local constraints \eqref{eq: i-cbf-qp}-\eqref{eq: j-cbf-qp} recover the original pairwise CBF condition.
Then, provided that the local QPs are feasible, the safe set $\mathcal{S}_{ij}$ is forward invariant by Theorem~\ref{thm: safety-guarantee}. For $N$ robots, there are $\binom{N}{2}$ distinct inter-robot CBF constraints. Each robot enforces its at most ${N-1}$ associated constraints in a single local QP.
With $M$ obstacles, the same QP additionally includes $M$ robot-obstacle CBF constraints. Algorithm~\ref{alg: decen-gjk-cbf} summarizes the decentralized implementation, with each robot independently performing its own computations.
\makeatletter
\newcommand{\StateNoNumber}[1]{%
  \begingroup
  \algrenewcommand\alglinenumber[1]{}%
  \State #1%
  \addtocounter{ALG@line}{-1}%
  \endgroup
}
\makeatother
\begin{algorithm}[htb!]
\caption{Decentralized GJK-CBF Safety Filter}
\label{alg: decen-gjk-cbf}
\begin{algorithmic}[1]
\Require Robots $\mathcal{N}$ with
$\{\bm{x}_i^0\}$,
$\{\bm{x}_i^d\}$,
$\{\unom_i\}$,
$\{\mathbf{W}_i\}$, $\forall i\in\mathcal{N}$,
$\bm{\beta}
=\{\beta_{ij}\in[0,1]\}_{1\leq i<j\leq N}$, static obstacles $\{\mathcal{O}_o\}_{o\in\mathcal{M}}$,
$d_{\mathrm{safe}}$,
$\gamma$, $T$, and $\Delta t$
\Ensure Safe control and state trajectories for all robots
\State $K\gets T/\Delta t$ 
\For{$k=0,\ldots,K-1$}
    \ForAll{$i\in\mathcal N$} 
        \State $A_i\gets[\,]$, $b_i\gets[\,]$ \Comment{Init. CBF-QP constraints}
        \State $\unom_i^k\gets\unom_i(\bm{x}_i^k,\bm{x}_i^d)$
        \Comment{Nominal control 
        }
    \EndFor
    \ForAll{$i,j\in\mathcal{N}$, $i<j$
    } \Comment{Inter-robot safety}
        \State $(\bm{a}_i^*,\bm{a}_j^*,d_{ij})
        \gets\texttt{GJK}
        \bigl(\mathcal{A}_i(\bm{x}_i^k),
        \mathcal{A}_j(\bm{x}_j^k)\bigr)$
        \State $h_{ij}\gets d_{ij}-d_{\mathrm{safe}}$ using \eqref{eq: h_dist} \Comment{CBF value}
        \State Compute $\mathbf{J}_i$ and $\mathbf{J}_j$ using \eqref{eq: nij}-\eqref{eq: alpha}
        \State $A_i\gets[A_i;-\mathbf{J}_i]$,
        \quad
        $b_i\gets[b_i;\beta_{ij}\gamma h_{ij}]$
        \State $A_j\gets[A_j;-\mathbf{J}_j]$,
        \quad
        $b_j\gets[b_j;(1-\beta_{ij})\gamma h_{ij}]$
    \EndFor
        \ForAll{$(i,o)\in\mathcal{N}\times\mathcal{M}
        $}\Comment{robot-obstacle safety}
        \State $(\bm{a}_i^*,\bm{a}_o^*,d_{io})\gets\texttt{GJK}
            \bigl(\mathcal{A}_i(\bm{x}_i^k),
            \mathcal{O}_o\bigr)$
        \State $h_{io}\gets d_{io}-d_{\mathrm{safe}}$ using \eqref{eq: h_dist} \Comment{CBF value}
        \State Compute $\mathbf{J}_i$ using \eqref{eq: nij}-\eqref{eq: alpha}
        \State $A_i\leftarrow[A_i;-\mathbf{J}_i],\quad b_i\leftarrow[b_i;\gamma h_{io}]$
        \EndFor
    \ForAll{$i\in\mathcal N$} \Comment{Solve CBF-QP \eqref{eq: cbf-min-norm}}
        \State $\bm{u}_{i}^k\gets
        {\min}_{\bm{u}_i\in\mathcal{U}_i}
        \|\bm{u}_i-\unom_i^k\|_{\mathbf{W}_i}^2
        \,\,\,\mathrm{s.t.}\,\,\,A_i \bm{u}_i\le b_i$
        \State $\bm{x}_i^{k+1}\gets
        \texttt{Propagate}({\bm{x}_i^k,\bm{u}_i^k,\Delta t}$) 
        \StateNoNumber using \eqref{eq: rigid-body-kine-homo-rep},
        \eqref{eq: euler-kine}, or \eqref{eq: quat-kine}
    \EndFor
\EndFor
\end{algorithmic}
\end{algorithm}
\begin{figure*}[t]
    \makebox[\linewidth][c]{
        \begin{xy}
            \xyimport(100,100){\includegraphics[width=0.964\textwidth]{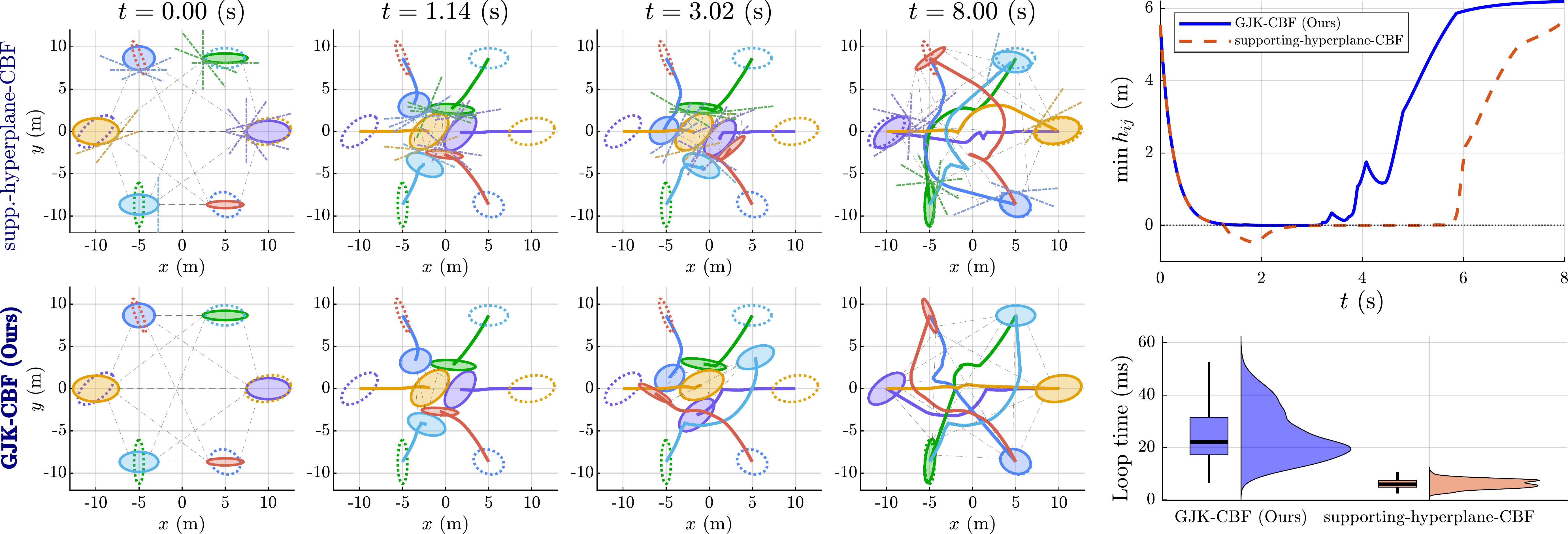}}
            ,(0.5,95.2)*+{\rotatebox{90}{\scriptsize\cite{Funada.etal.TCST24}}}
        \end{xy}
    }
    \caption{\textbf{Left:} Comparison between \textbf{(top)} the supporting-hyperplane-CBF \cite{Funada.etal.TCST24} and \textbf{(bottom)} the proposed \gjkcbf~for six ellipsoidal robots swapping positions. Dotted outlines denote the desired poses, and solid curves show the robot trajectories. Dash-dotted lines in the top row represent the supporting hyperplanes. \textbf{Right:} \textbf{(top)} minimum pairwise CBF values and \textbf{(bottom)} control-loop computation-time distributions, 
    which are measured for the serial implementation and include all pairwise CBF evaluations and local QP solves for the six robots.}
    \label{fig: cmp}
\end{figure*}

\section{Simulation Results}\label{sec: simu}
We validate the proposed \gjkcbf~through 2D and 3D simulations, including comparisons with \cite{Funada.etal.TCST24} in 2D multi-robot position swapping and 3D slit passage (\ref{subsec: simu-2d}), followed by 3D position swapping (\ref{subsec: simu-3d}), and finally demonstrate its applicability to a velocity-controlled manipulator (\ref{subsec: simu-arm}).

Following \cite{Funada.etal.TCST24}, in all simulations except the manipulator example, we use the following nominal controllers ${\vnom_i=k_{v,i}R_i^\top(\bm{p}_i^d-\bm{p}_i)}$, ${\wnom_i= \frac{k_{\omega,i}}{2}(R_i^\top R_i^d-(R_i^\top R_i^d)^\top)^\vee}$\footnote{The vee map $(\cdot)^\vee:so(d)\to\mathbb{R}^{d(d-1)/2}$ is the inverse of the hat map defined in \eqref{eq: hat}.}.
All simulations are conducted in MATLAB, with QPs solved using \texttt{quadprog}. We set $\alpha(h)=\gamma h$ with $\gamma=4$, $\beta_{ij}=0.5$, and $d_{\mathrm{safe}}=0.05$~m unless otherwise specified. Control bounds are imposed only in the manipulator example.
We note that \gjkcbf~applies to any pair of geometries satisfying Assumption~\ref{assump: conv}, including superquadrics \cite{Zhao.etal.arXiv26}. Here, we use superellipsoids, a generalization of ellipsoids, to model strictly convex bodies. Each superellipsoid is defined by semi-axes ${\bm{e}=(e_1,e_2,e_3)}$ and shape parameter $\rho$ as ${\mathcal{E}=\{\bm{y}\in\mathbb{R}^3:\sum_{i=1}^{3}\left|y_i/e_i\right|^\rho\leq1\}}$. $\mathcal{E}$ is strictly convex for $\rho>1$ and reduces to an ellipsoid when $\rho=2$, with larger $\rho$ producing increasingly box-like shapes. 

\subsection{Comparative study}\label{subsec: simu-2d}
We first compare our method with the supporting-hyperplane-CBF \cite{Funada.etal.TCST24}, 
designed for ellipsoidal robots. We consider six heterogeneous ellipsoidal robots uniformly distributed on a circle of radius ${r_0=10}$~m and indexed counterclockwise from the purple robot on the $x$-axis (Fig.~\ref{fig: cmp}). 
Each robot's goal is the initial position of its diametrically opposite robot. The robots have semi-axis lengths of ${(2.52, 1.26)}$, ${(2.59, 0.56)}$, ${(1.82, 1.40)}$, ${(2.66, 1.54)}$, ${(2.24, 1.19)}$, and ${(2.10, 0.42)}$~m, respectively. All initial orientations are zero, with desired orientations of $40,90,-25,10,0,-70$ degrees, respectively.
For both methods, we set $k_{v,i}=1.2$, $k_{\omega,i}=0.8$, and use $\mathbf{W}=\mathrm{diag}(1,1,5)$ for all robots. For \cite{Funada.etal.TCST24}, the auxiliary variable's 
virtual-input weight (in the CBF-QP) and gain 
are $1$ and ${k_{z,0}=10}$, respectively. We simulate for ${T=8}$~s with ${\Delta t=5}$~ms.

As shown in Fig.~\ref{fig: cmp}, the proposed \gjkcbf~maintains positive pairwise separation for all time, whereas, under the parameters used here, the supporting-hyperplane-CBF results in inter-robot collisions, which might arise from inter-sample effects. The baseline nevertheless 
has lower computation time by avoiding
explicit pairwise distance queries at each control loop. The trade-off is a higher-dimensional local QP due to auxiliary states and controls for the supporting hyperplanes of each pair.
Our custom GJK implementation takes support functions as inputs and returns the witness pair, but is not optimized for computational efficiency. We anticipate that accelerated variants such as GJK++ \cite{Montaut.etal.TRO24} can further reduce the runtime of \gjkcbf.

For the 3D comparison, we consider a ${4\times8\times1}$~m box-shaped robot navigating through a narrow vertical slit formed by two superellipsoidal obstacles (Fig.~\ref{fig: teaser}). The robot moves from ${\bm{p}^0=[0,2.5,0]^\top}$ with
an initial attitude of
${\bm{\eta}^0=[10^\circ,0,0]^\top}$ to ${\bm{p}^d=[20,2,2.8]^\top}$ with zero desired attitude. The obstacles are centered at ${(12,-4,2)}$ and ${(12,5,2)}$ with semi-axes ${(4.5,3.6,7.2)}$ and ${\rho=3}$. The proposed \gjkcbf~uses the original geometries, whereas for the ellipsoid-only baseline  \cite{Funada.etal.TCST24}, we approximate each body by its minimum-volume enclosing ellipsoid.
We set ${k_v=3}$, ${k_\omega=0.8}$, ${\mathbf{W}=\mathrm{diag}(\mathbf{I}_3,3\mathbf{I}_3)}$, ${T=5}$~s, and ${\Delta t=1}$~ms for both methods.
As shown in Fig.~\ref{fig: teaser}, conservative ellipsoidal approximations block direct passage through the slit, forcing a slight detour above the obstacles
(Fig.~\ref{fig: teaser}-Top), whereas \gjkcbf~uses the original convex bodies, allowing direct passage
(Fig.~\ref{fig: teaser}-Bottom). 

\subsection{Multi-robot swap in 3D}\label{subsec: simu-3d}
We next consider four heterogeneous superellipsoidal robots exchanging diagonally opposite poses. Their initial positions are ${[0,6,-1]^\top}$, ${[0,-6,1]^\top}$, ${[15,-6,1]^\top}$, ${[15,6,-1]^\top}$, with corresponding semi-axes ${(3.6,1.8,1.3)}$, ${(1.2,3.7,0.4)}$, ${(3.2,2.0,2.0)}$, ${(2.8,2.9,1.5)}$; shape parameters ${\rho=2,3,6,4}$. All initial attitudes are zero, while the desired roll-pitch-yaw angles in degrees are ${(20,60,-45)}$, ${(90,25,-10)}$, ${(0,-70,-30)}$, ${(-90,20,30)}$, respectively.
We set ${k_v=1.2}$, ${k_\omega=0.8}$, and ${\mathbf{W}=\mathrm{diag}(\mathbf{I}_3,10\mathbf{I}_3)}$. 
We simulate for ${T=6}$~s with ${\Delta t=10}$~ms. As shown in Fig.~\ref{fig: 3d-swap}, all robots safely exchange poses while maintaining positive pairwise CBF values. 
\begin{figure}[htb!]
\captionsetup[subfigure]{labelformat=empty}
\centering
\begin{subfigure}{\linewidth}
    \centering
    \includegraphics[width=.95\linewidth]{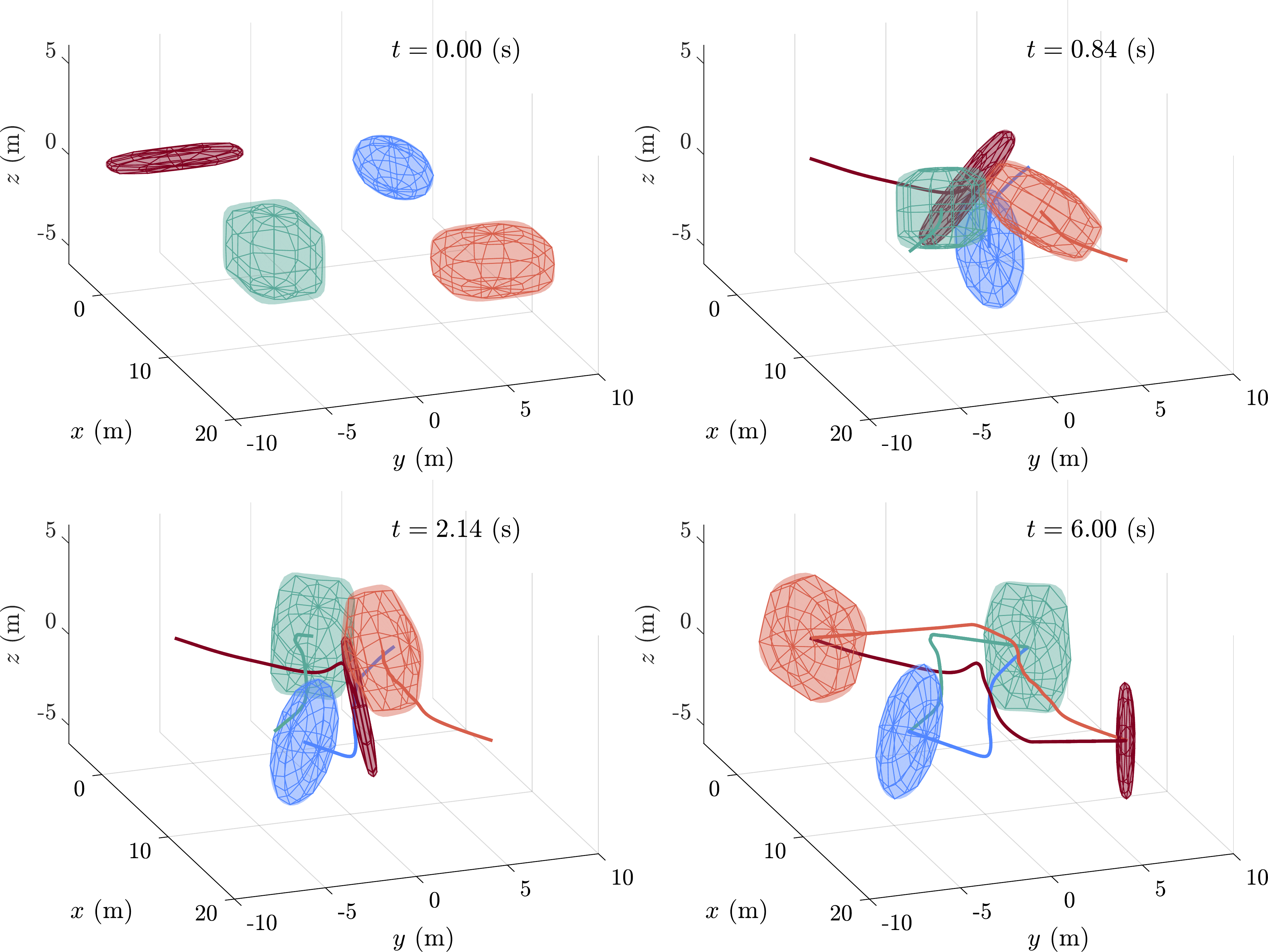}
\end{subfigure}
\\[0.5em]
\begin{subfigure}{\linewidth}
    \centering
    \includegraphics[width=.92\linewidth]{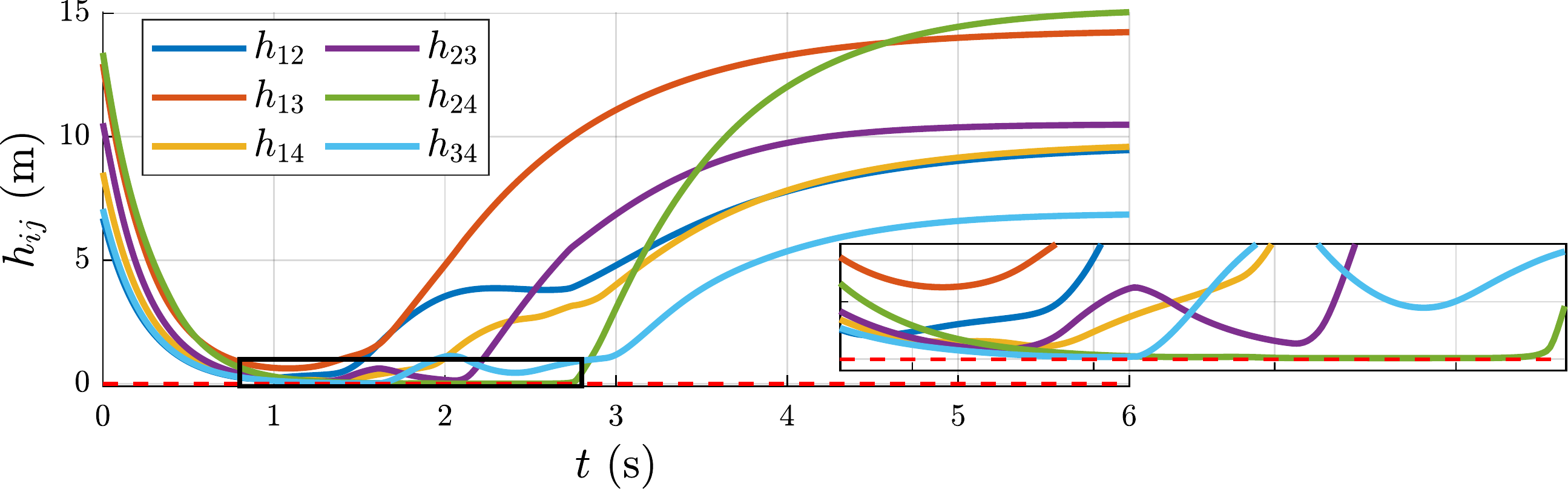}
\end{subfigure}
\caption{\textbf{Top:} Snapshots of four heterogeneous superellipsoidal robots exchanging diagonally opposite poses using the decentralized \gjkcbf~\eqref{eq: decentralized-cbf-qp}. \textbf{Bottom:} The corresponding pairwise CBF values.}
\label{fig: 3d-swap}
\end{figure}

\subsection{6-DOF UR5 manipulator} \label{subsec: simu-arm}
Finally, we apply \gjkcbf~to a velocity-controlled UR5 manipulator modeled using the Robotics System Toolbox.
We use four superellipsoids ${\mathcal{E}_\ell,\ell\in\{1,\ldots,4\}}$ with $\rho=4$ to enclose the upper arm, forearm, and two wrist sections, each rigidly attached to its parent link with body-fixed frame $\mathcal{B}^\ell$.
The state is ${\mathbf{q}=[\mathfrak{q}_1,\ldots,\mathfrak{q}_6]^\top\in\mathbb{R}^6}$, where $\mathfrak{q}_i$ is the $i$-th joint angle. The control input is now the joint velocity ${\mathbf{u}=\dot{\mathbf q}\in\mathbb{R}^6}$. 
The task is to move the end effector (EE) from the start point to the goal point, see Fig.~\ref{fig: simu-ur5}-Top.

A smooth $\nth{5}$-order reference trajectory in joint space is generated as
$\mathbf{q}_{\mathrm{ref}}(t) = \mathbf{q}^0+(\mathbf{q}^d-\mathbf{q}^0)
\left(10\tau^3-15\tau^4+6\tau^5\right)$ with ${\tau=t/T}$. The EE position and velocity reference trajectories, ${\bm{p}_{\mathrm{ref}}}(t)$ and ${{\dot{\bm{p}}_{\mathrm{ref}}}}(t)$, are obtained via forward kinematics. The nominal joint-velocity input is given by
${\overline{\mathbf{u}}=J_v^\dagger\left(K_p(\bm{p}_{\mathrm{ref}}-\bm{p})+\dot{\bm{p}}_{\mathrm{ref}}\right)+\left(\mathbf{I}_6-J_v^\dagger J_v\right)K_q(\mathbf{q}_{\mathrm{ref}}-\mathbf{q})}$,
where ${J_v(\mathbf{q})\in\mathbb{R}^{3\times6}}$ is the translational component of the EE geometric Jacobian, and ${J_v^\dagger= J_v^\top(J_vJ_v^\top+\lambda^2\mathbf{I}_3)^{-1}}$ is its damped pseudoinverse, with ${K_p=4}$, ${K_q=1}$, ${\lambda=0.05}$.

For each link-obstacle pair ${(\ell,o)}$, ${o\in\{1,2\}}$, 
the 
body-velocity Jacobian $\mathbf{J}_{\ell,o}$ given by \eqref{eq: Jaco} 
is evaluated using the origin of the
parent-link frame $\mathcal{B}^\ell$, rather than the center of $\mathcal{E}_\ell$, as the reference point. A separate Jacobian, $J_{\bm{V}_\ell}(\mathbf q)$, is the parent-link geometric Jacobian
and maps the joint velocity $\mathbf{u}$ to the body velocity $\bm{V}_\ell$, i.e., ${\bm{V}_\ell=J_{\bm{V}_\ell}(\mathbf{q})\mathbf{u}}$. Thus, the CBF derivative is computed using the chain rule as ${\dot{h}_{\ell,o}=\mathbf{J}_{\ell,o}\bm{V}_\ell=\mathbf{J}_{\ell,o}J_{\bm{V}_\ell}(\mathbf{q})\mathbf{u}}$. We set ${d_{\mathrm{safe}}=0.03}$~m, ${T=15}$~s, ${\Delta t =50}$~ms, and impose the joint-velocity bounds ${-1.5\leq\dot{\mathfrak{q}}_i\leq1.5}$~rad/s for each joint. As shown in Fig.~\ref{fig: simu-ur5}, without the safety filter, the nominal trajectory (blue) intersects both obstacles, whereas \gjkcbf~modifies the joint velocities and guides the manipulator around the obstacles.

\begin{figure}[htb!]
    \centering
    \includegraphics[width=0.98\linewidth]{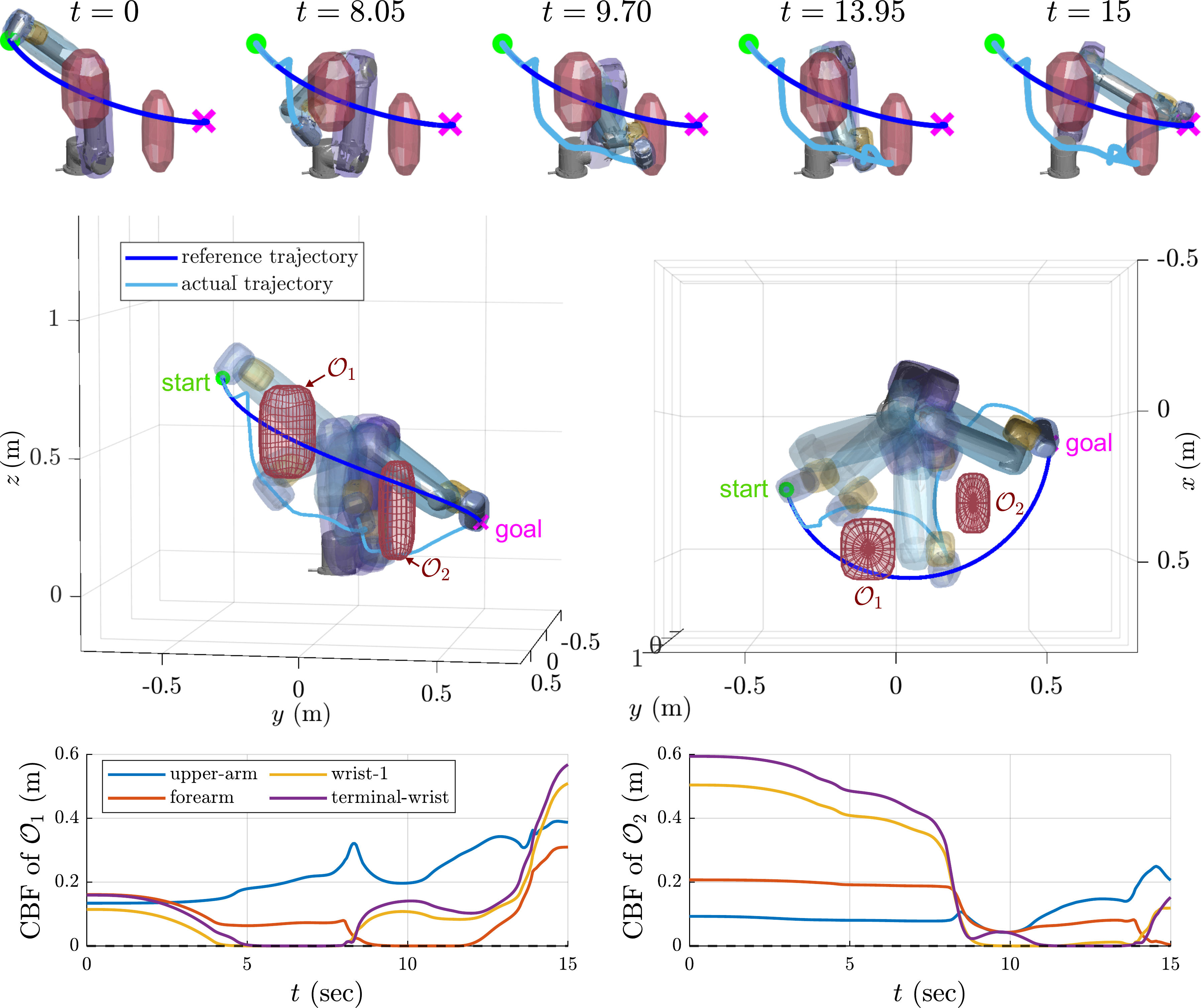}
    \caption{Collision avoidance for a velocity-controlled UR5 manipulator. \textbf{Top:} Side \textbf{(left)} and top \textbf{(right)} views of the motion from the start (green) to the goal (magenta). The blue and cyan curves denote the reference and filtered end-effector trajectories, respectively. \textbf{Bottom:} CBF values for the four links relative to superellipsoidal obstacles $\mathcal{O}_1$ and $\mathcal{O}_2$.
    }
    \label{fig: simu-ur5}
\end{figure}

\section{Conclusions and Future Work}\label{sec: conc}
We present \gjkcbf, a general distance-based CBF formulation for collision avoidance between convex bodies, provided that at least one body in each pair is strictly convex. We show that \gjkcbf~is continuously differentiable and its gradient can be computed solely from the witness pair, without using diffOpt. We validate our method through multi-robot position swapping in both 2D and 3D, a single robot passing through a vertical slit in 3D, and a UR5 manipulator application.
Comparative studies with the supporting-hyperplane-CBF \cite{Funada.etal.TCST24} show that our method is less conservative while maintaining collision-free motion. Future work will relax the strict-convexity assumption using generalized gradients and nonsmooth CBFs for overlapping and other nondifferentiable configurations and extend the framework to higher-relative-degree systems.
\section{Acknowledgment}
The authors used generative AI tools \cite{openai_chatgpt_2026} to polish the grammar, clarify selected derivations, and refine code implementation. All research ideas and technical content were developed by the authors.

\bibliographystyle{IEEEtran}
\bibliography{references} 

@misc{Chen.etal.arXiv26,
      title={Exact Signed-Distance Control Barrier Functions via Minkowski Operations for Safe Navigation among Polytopes}, 
      author={Yi-Hsuan Chen and Shuo Liu and Wei Xiao and Calin Belta and Michael Otte},
      year={2026},
      eprint={2608.02886},
      archivePrefix={arXiv},
      primaryClass={cs.RO},
      url={https://arxiv.org/abs/2608.02886}, 
}

@misc{Chen.etal.CDC26,
      title={Long-Horizon Geometry-Aware Navigation among Polytopes via MILP-MPC and Minkowski-Based CBFs}, 
      author={Yi-Hsuan Chen and Salman Ghori and Ania Adil and Eric Feron and Calin Belta},
      year={2026},
      eprint={2604.00162},
      archivePrefix={arXiv},
      primaryClass={cs.RO},
      url={https://arxiv.org/abs/2604.00162}, 
}

@article{Zhao.etal.arXiv26,
  title={Sq-cbf: Signed distance functions for numerically stable superquadric-based safety filtering},
  author={Zhao, Haocheng and Brunke, Lukas and Lagerquist, Oliver and Zhou, Siqi and Schoellig, Angela P},
  journal={arXiv preprint arXiv:2602.11049},
  year={2026}
}

@book{Li.etal.book94,
  title={A mathematical introduction to robotic manipulation},
  author={Li, Zexiang and Sastry, Shankar and Murray, R},
  year={1994},
  publisher={CRC press Boca Raton, FL}
}

@book{4M,
  title={Computational geometry: algorithms and applications},
  author={De Berg, Mark},
  year={2000},
  publisher={Springer Science \& Business Media}
}

@book{LaValle.Planning.06,
  title={Planning Algorithms},
  author={LaValle, Steven M},
  year={2006},
  publisher={Cambridge University Press}
}

@article{gjk,
  title={A fast procedure for computing the distance between complex objects in three-dimensional space},
  author={Gilbert, Elmer G and Johnson, Daniel W and Keerthi, S Sathiya},
  journal={IEEE Journal on Robotics and Automation},
  volume={4},
  number={2},
  pages={193--203},
  year={1988},
}

@inproceedings{epa,
  title={Proximity queries and penetration depth computation on 3d game objects},
  author={Van Den Bergen, Gino},
  booktitle={Game Developers Conference},
  volume={170},
  pages={209},
  year={2001}
}

@article{Lien.RSS08,
  title={Hybrid motion planning using Minkowski sums},
  author={Lien, Jyh-Ming},
  journal={Proceedings of robotics: science and systems IV},
  year={2008}
}

@article{Wieland.Allgower.IFAC07,
  title={Constructive safety using control barrier functions},
  author={Wieland, Peter and Allg{\"o}wer, Frank},
  journal={IFAC Proceedings Volumes},
  volume={40},
  number={12},
  pages={462--467},
  year={2007},
  publisher={Elsevier}
}

@inproceedings{Ames.etal.ECC19,
  title={Control barrier functions: Theory and applications},
  author={Ames, Aaron D and Coogan, Samuel and Egerstedt, Magnus and Notomista, Gennaro and Sreenath, Koushil and Tabuada, Paulo},
  booktitle={European Control Conference},
  pages={3420--3431},
  year={2019},
}

@inproceedings{Amos.Kolter.ICML17,
  title={{OptNet}: Differentiable optimization as a layer in neural networks},
  author={Amos, Brandon and Kolter, J Zico},
  booktitle={International Conference on Machine Learning},
  pages={136--145},
  year={2017},
  organization={PMLR}
}

@misc{solà.etal.arXiv21,
      title={A micro Lie theory for state estimation in robotics}, 
      author={Joan Solà and Jeremie Deray and Dinesh Atchuthan},
      year={2021},
      eprint={1812.01537},
      archivePrefix={arXiv},
      primaryClass={cs.RO},
      url={https://arxiv.org/abs/1812.01537}, 
}

@misc{openai_chatgpt_2026,
  author       = {{OpenAI}},
  title        = {{C}hat{GPT}},
  year         = {2026},
  howpublished = {[Online]. Available: \url{https://openai.com/chatgpt/}},
}

@article{Funada.etal.TCST24,
  title={Collision avoidance for ellipsoidal rigid bodies with control barrier functions designed from rotating supporting hyperplanes},
  author={Funada, Riku and Nishimoto, Koju and Ibuki, Tatsuya and Sampei, Mitsuji},
  journal={IEEE Transactions on Control Systems Technology},
  volume={33},
  number={1},
  pages={148--164},
  year={2024},
  publisher={IEEE}
}

@article{Thirugnanam.etal.arXiv23,
  title={Control barrier functions for collision avoidance between strongly convex regions},
  author={Thirugnanam, Akshay and Zeng, Jun and Sreenath, Koushil},
  journal={arXiv preprint arXiv:2306.13259},
  year={2023}
}

@ARTICLE{Dai.etal.RAL23,
  author={Dai, Bolun and Khorrambakht, Rooholla and Krishnamurthy, Prashanth and Gonçalves, Vinícius and Tzes, Anthony and Khorrami, Farshad},
  journal={IEEE Robotics and Automation Letters}, 
  title={Safe Navigation and Obstacle Avoidance Using Differentiable Optimization Based Control Barrier Functions}, 
  year={2023},
  volume={8},
  number={9},
  pages={5376-5383},
  doi={10.1109/LRA.2023.3295654}}

@article{Wu.etal.TCB25,
  title={Optimization-free smooth control barrier function for polygonal collision avoidance},
  author={Wu, Shizhen and Fang, Yongchun and Sun, Ning and Lu, Biao and Liang, Xiao and Zhao, Yiming},
  journal={IEEE Transactions on Cybernetics},
  year={2025},
  publisher={IEEE}
}

@INPROCEEDINGS{Molnar.CCTA25,
  author={Molnar, Tamas G.},
  booktitle={2025 IEEE Conference on Control Technology and Applications (CCTA)}, 
  title={Navigating Polytopes with Safety: A Control Barrier Function Approach}, 
  year={2025},
  volume={},
  number={},
  pages={179-184},
  doi={10.1109/CCTA53793.2025.11151473}}

@INPROCEEDINGS{Usevitch.Sahleen.ACC25,
  author={Usevitch, James and Sahleen, Jackson},
  booktitle={2025 American Control Conference (ACC)}, 
  title={Safety for Time-Varying Parameterized Sets Using Control Barrier Function Methods}, 
  year={2025},
  volume={},
  number={},
  pages={267-274},
  doi={10.23919/ACC63710.2025.11107722}}

@article{Matias.Silvestre.arXiv26,
  title={Safe Navigation under Uncertain Obstacle Dynamics using Control Barrier Functions and Constrained Convex Generators},
  author={Matias, Hugo and Silvestre, Daniel},
  journal={arXiv preprint arXiv:2601.07715},
  year={2026}
}

@ARTICLE{Fernandez.etal.RAL26,
  author={Fernandez, Louis and Sutjipto, Sheila and Moreno, Victor Hernandez and Tu, Anh Minh and Carmichael, Marc G.},
  journal={IEEE Robotics and Automation Letters}, 
  title={Differentiable Obstacle Avoidance Framework for Robot Safety}, 
  year={2026},
  volume={11},
  number={4},
  pages={5158-5165},
  doi={10.1109/LRA.2026.3664595}}

@article{Jo.etal.26,
  title={Geometry-Aware Control Barrier Functions for Collision Avoidance via Bernstein Polynomial Approximations},
  author={Jo, Siwon and Zhang, Yanze and Yang, Yupeng and Luo, Wenhao},
  journal={arXiv preprint arXiv:2605.30696},
  year={2026}
}

@article{Guo.etal.IoT26,
  title={An Improved Differentiable Optimization-Based Control Barrier Function Scheme to Reactive Motion Planning for Robots},
  author={Guo, Zhenchuan and Chen, Yukai and Wei, Yanling and Nie, Xiaokai},
  journal={IEEE Internet of Things Journal},
  year={2026},
  publisher={IEEE}
}

@INPROCEEDINGS{Wang.etal.CCDC25,
  author={Wang, Weijia and Meng, Tao and Wang, Kun and Lei, Jiakun},
  booktitle={2025 37th Chinese Control and Decision Conference (CCDC)}, 
  title={Geometric Control Barrier Function: Extensions on the Lie Group SE(3)}, 
  year={2025},
  volume={},
  number={},
  pages={4773-4778},
  doi={10.1109/CCDC65474.2025.11090631}}

@article{Lin.Egerstedt.arXiv26,
  title={Stochastic control barrier functions under state estimation: From Euclidean space to lie groups},
  author={Lin, Ruoyu and Egerstedt, Magnus},
  journal={arXiv preprint arXiv:2601.16198},
  year={2026}
}

@INPROCEEDINGS{De.etal.ACC26,
  author={De Sa, Massimiliano and Ong, Pio and Ames, Aaron D.},
  booktitle={2026 American Control Conference (ACC)}, 
  title={From Bundles to Backstepping: Geometric Control Barrier Functions for Safety-Critical Control on Manifolds}, 
  year={2026},
  volume={},
  number={},
  pages={1928-1935},
  doi={}}

@ARTICLE{Choset.Lee.TRM01,
  author={Choset, H. and Ji Yeong Lee},
  journal={IEEE Transactions on Robotics and Automation}, 
  title={Sensor-based construction of a retract-like structure for a planar rod robot}, 
  year={2001},
  volume={17},
  number={4},
  pages={435-449},
  doi={10.1109/70.954756}}

@article{Lee.Choset.IJRR05,
  title={Sensor-based planning for a rod-shaped robot in three dimensions: Piecewise retracts of R3$\times$ S2},
  author={Lee, Ji Yeong and Choset, Howie},
  journal={The International Journal of Robotics Research},
  volume={24},
  number={5},
  pages={343--383},
  year={2005},
  publisher={SAGE Publications}
}

@article{Schulman.etal.IJRR14,
  title={Motion planning with sequential convex optimization and convex collision checking},
  author={Schulman, John and Duan, Yan and Ho, Jonathan and Lee, Alex and Awwal, Ibrahim and Bradlow, Henry and Pan, Jia and Patil, Sachin and Goldberg, Ken and Abbeel, Pieter},
  journal={The International Journal of Robotics Research},
  volume={33},
  number={9},
  pages={1251--1270},
  year={2014},
  publisher={Sage Publications Sage UK: London, England}
}

@article{lozano-perez_spatial_1983,
    title = {Spatial {Planning}: {A} {Configuration} {Space} {Approach}},
    volume = {C-32},
    issn = {1557-9956},
    shorttitle = {Spatial {Planning}},
    url = {https://ieeexplore.ieee.org/document/1676196/},
    doi = {10.1109/TC.1983.1676196},
    number = {2},
    urldate = {2026-04-16},
    journal = {IEEE Transactions on Computers},
    author = {{Lozano-Perez}},
    month = feb,
    year = {1983},
    pages = {108--120},
}

@article{Escande.etal.TRO14,
  title={A strictly convex hull for computing proximity distances with continuous gradients},
  author={Escande, Adrien and Miossec, Sylvain and Benallegue, Mehdi and Kheddar, Abderrahmane},
  journal={IEEE Transactions on Robotics},
  volume={30},
  number={3},
  pages={666--678},
  year={2014},
  publisher={IEEE}
}

@ARTICLE{Singletary.etal.LCSS22,
  author={Singletary, Andrew and Kolathaya, Shishir and Ames, Aaron D.},
  journal={IEEE Control Systems Letters}, 
  title={Safety-Critical Kinematic Control of Robotic Systems}, 
  year={2022},
  volume={6},
  number={},
  pages={139-144},
  doi={10.1109/LCSYS.2021.3050609}}

@ARTICLE{Xiao.Belta.TAC22,
  author={Xiao, Wei and Belta, Calin},
  journal={IEEE Transactions on Automatic Control}, 
  title={High-Order Control Barrier Functions}, 
  year={2022},
  volume={67},
  number={7},
  pages={3655-3662},
  doi={10.1109/TAC.2021.3105491}}

@ARTICLE{Wang.etal.TRO17,
  author={Wang, Li and Ames, Aaron D. and Egerstedt, Magnus},
  journal={IEEE Transactions on Robotics}, 
  title={Safety Barrier Certificates for Collisions-Free Multirobot Systems}, 
  year={2017},
  volume={33},
  number={3},
  pages={661-674},
  doi={10.1109/TRO.2017.2659727}}

@ARTICLE{Montaut.etal.TRO24,
  author={Montaut, Louis and Le Lidec, Quentin and Petrik, Vladimir and Sivic, Josef and Carpentier, Justin},
  journal={IEEE Transactions on Robotics}, 
  title={GJK++: Leveraging Acceleration Methods for Faster Collision Detection}, 
  year={2024},
  volume={40},
  number={},
  pages={2564-2581},
  doi={10.1109/TRO.2024.3386370}}

@inproceedings{Zimmermann.etal.IROS22,
  title={Differentiable collision avoidance using collision primitives},
  author={Zimmermann, Simon and Busenhart, Matthias and Huber, Simon and Poranne, Roi and Coros, Stelian},
  booktitle={2022 IEEE/RSJ International Conference on Intelligent Robots and Systems (IROS)},
  pages={8086--8093},
  year={2022},
  organization={IEEE}
}

@inproceedings{Li.etal.IROS23,
  title={Frogger: Fast robust grasp generation via the min-weight metric},
  author={Li, Albert H and Culbertson, Preston and Burdick, Joel W and Ames, Aaron D},
  booktitle={2023 IEEE/RSJ International Conference on Intelligent Robots and Systems (IROS)},
  pages={6809--6816},
  year={2023},
  organization={IEEE}
}

@inproceedings{Montaut.etal.ICRA23,
  title={Differentiable collision detection: a randomized smoothing approach},
  author={Montaut, Louis and Le Lidec, Quentin and Bambade, Antoine and Petrik, Vladimir and Sivic, Josef and Carpentier, Justin},
  booktitle={2023 IEEE International Conference on Robotics and Automation (ICRA)},
  pages={3240--3246},
  year={2023},
  organization={IEEE}
}

\end{document}